\renewcommand{\algorithmicrequire}{ \textbf{Input:}} 
\renewcommand{\algorithmiccomment}{ \textbf{Initialize:}} 
\renewcommand{\algorithmicensure}{ \textbf{Result:}} 
\newtheorem{thm}{Theorem}
\newtheorem{lem}{Lemma}
\newtheorem{rem}{Remark}
\newtheorem{cor}{Corollary}
\title{\LARGE \bf
Learning-based Observer for Coupled Disturbance
}
	\author{Jindou Jia$^{1,2,*}$, Meng Wang$^{2, *}$, Zihan Yang$^{2}$, Bin Yang$^{2}$, Yuhang Liu$^{2}$, Kexin Guo$^{2,\dagger}$, and Xiang Yu$^{2}$
    \thanks{*Equal contribution.}
    \thanks{$\dagger$Corresponding author(kxguo@buaa.edu.cn).}
	\thanks{
		$^{1}$Nanyang Technological University, 639798, Singapore. $^{2}$Beihang University, 100191, Beijing, China.}
	}
\begin{document}

\maketitle
\thispagestyle{empty}
\pagestyle{empty}

\begin{abstract}

 Achieving high-precision control for robotic systems is hindered by the low-fidelity dynamical model and external disturbances. Especially, the intricate coupling between internal uncertainties and external disturbances further exacerbates this challenge. This study introduces an effective and convergent algorithm enabling accurate estimation of the coupled disturbance via combining control and learning philosophies. Concretely, by resorting to \textit{Chebyshev} series expansion, the coupled disturbance is effectively decomposed into an unknown parameter matrix and two known structures dependent on system state and external disturbance respectively. A regularized least squares process is subsequently formalized to learn the parameter matrix using historical time-series data. Furthermore, a polynomial disturbance observer is specifically devised to achieve a high-precision estimation of the coupled disturbance by utilizing the learned {structure} portion. Extensive simulations and real flight tests valid the effectiveness of the proposed framework. We believe this work can offer a new pathway to integrate learning approaches into control frameworks for addressing longstanding challenges in robotic applications.

\end{abstract}

 \section{Introduction}
High-precision control is crucial for robotic systems where {states-related} model uncertainties and external factors-conditioned disturbances are pervasive. A plethora of estimation schemes have been developed regarding model uncertainties and external disturbances as a lumped disturbance \cite{o2022neural,richards2022control, Neural_Networks}. \textcolor{black}{Few studies have penetrated the difficult-to-model coupling between internal states and external factors to reduce control conservativeness.}

In the control \textcolor{black}{area}, \textcolor{black}{focus on the coupled disturbance (depending not only on external disturbance but also internal model uncertainties),} many classical methods attempt to estimate or attenuate it with a bounded (or bounded derivative) assumption, such as $\mathcal{L}_1$ adaptive observer \cite{hovakimyan2010l1, huang2023datt}, extended state observer (ESO) \cite{ESO}, nonlinear disturbance observer (NDO) \cite{guo2005disturbance, NDO_bounded_variation}, composite adaptive control \cite{o2022neural, slotine1989composite}, sliding mode control \cite{yang1999sliding}, and $\mathcal{H}_{\infty}$ control \cite{khalil1996robust}. The bounded assumption on coupled disturbances has explicit limitations from a theoretical perspective. \textcolor{black}{A causality dilemma arises that the state should be preemptively restricted within a bound whose upper limit may be unreasonable, before proving the stability \cite{10100656}.} Moreover, this assumption usually results in a bounded estimation error \cite{guo2005disturbance}. A smaller bound of the coupled disturbance (or its derivative) is imposed to enable a small estimation/control error, which may not be satisfied. 



Benefiting from the growing computing power, massively available data, and advanced algorithms, data-driven learning approaches appear to be an alternative for handling robotic disturbances. In the data-driven paradigm, most approaches attempt to learn the unknown lumped disturbance with neural networks (NNs) \cite{Neural_Networks, saviolo2023learning}. \textcolor{black}{Focus on the coupled disturbance}, a major challenge emerges that the external factor-conditioned disturbances cannot be entirely sampled as learning input (e.g., the real wind speed for quadrotors). Inspired by the time-delay embedding {theorem}\cite{takens2006detecting}, historical states are concentrated as the learning input to reflect unmeasurable factors \cite{haggerty2023control, lale2024falcon}. With the assistance of meta-learning, several works establish a bi-level optimization to handle coupled wind disturbance \cite{o2022neural, richards2022control, xie2023hierarchical}. {By merging with} composite adaptive control \cite{slotine1989composite}, the meta-learning scheme can remarkably improve the {dynamic} performance, but this scheme is limited to specific disturbance and has yet to result in a zero-error estimation theoretically. Moreover, training {process} for these methods is black-box and labor-intensive.

\begin{figure}
	\centering
	\includegraphics[width=0.8\linewidth]{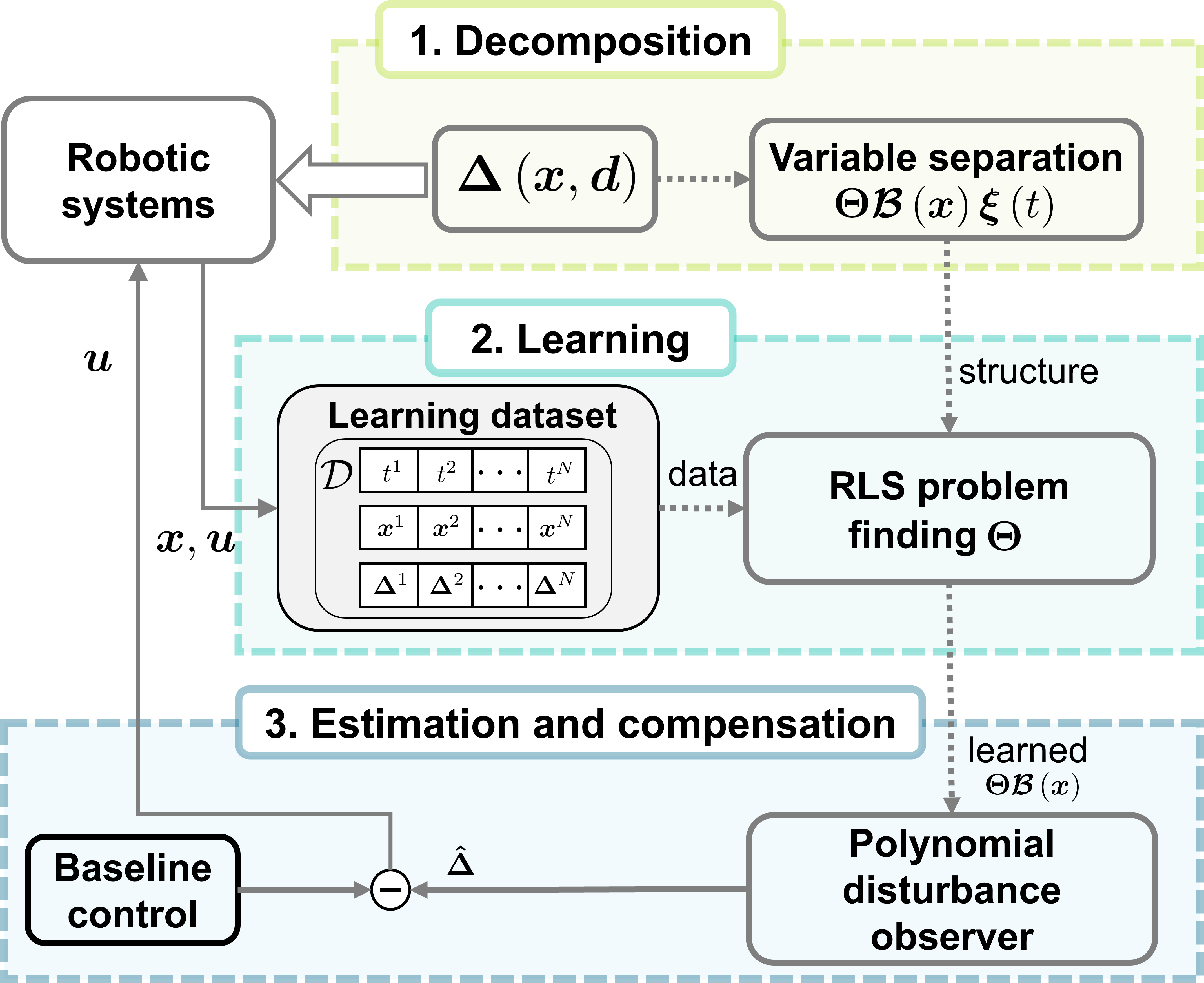}
	\caption{\textcolor{black}{Overall framework of our proposed learning-based observer.} Supplementary Materials: \url{https://anonymous.4open.science/r/Coupled_disturbance-33DF}.}
	\label{overall_framework}
\end{figure}

More recently, the interest in combining control-theoretic approaches with data-driven learning techniques is thriving for achieving stable and high-precision control performance \cite{o2022neural, richards2022control, huang2023datt, dawson2023safe, lutter2023combining, banderchuk2023combining, jia2025, jia2024feedback}. For more details refer to Supplementary Materials \ref{Related_Work}. Following this routine, we present a convergent estimation {framework} specifically for the coupled disturbance. Concretely, a learning algorithm is employed to learn the latent invariable parameters offline, while a polynomial observer is used to estimate the time-varying part online by utilizing the learned portion. As depicted in Fig. \ref{overall_framework}, the main contributions are summarized as follows. \textbf{Decomposition:} A variable decomposition principle (Theorem \ref{thm_separation}) is established to decompose the coupled disturbance into an unknown parameter matrix, a system-state-related matrix, and an external-disturbance-related vector, with an arbitrarily small residual. \textcolor{black}{The \textit{Chebyshev} polynomial is utilized due to its suitable property.} \textbf{Learning:} \textcolor{black}{With an analytically derivable assumption on external disturbance,} a corollary (Corollary \ref{Corollary_seperation}) is further developed, which enables the unknown parameter matrix to be learned in a supervised way. Afterward, the learning objective is formalized as a regularized least squares (RLS) problem with a closed-form solution. \textbf{Estimation and compensation:} By leveraging the learned knowledge, a polynomial disturbance observer is designed to estimate the final coupled disturbance (Theorem \ref{thm_DO_high}), which can improve control performance after compensation.

\textcolor{black}{In the proposed framework, 1) there is no need to manually model complex disturbance, 2) the bounded assumption on the coupled disturbance (or its derivative) in traditional control field \cite{o2022neural, hovakimyan2010l1, ESO, NDO_bounded_variation, yang1999sliding, khalil1996robust} can be avoided, and 3) the implemented learning strategy is explainable and lightweight compared with deep neural networks (DNNs) based methods \cite{o2022neural, richards2022control, Neural_Networks, saviolo2023learning}.} Multiple simulations and flight tests (indoor and outdoor) are implemented to verify the proposed scheme.

\subsection{Related Work} \label{Related_Work}
As our proposed method falls into the realm of the scheme combining control and data-driven learning, the related advanced research is also reviewed. 

\subsubsection{Analytical disturbance estimation}

The basic idea of the disturbance estimation approach is to design an ad-hoc observer to estimate the disturbance by utilizing its influence on the system. The estimation method is a two-degree-of-freedom control framework \cite{guo2005disturbance, disturbance_prediction}, which can achieve tracking and anti-disturbance performances simultaneously. For most disturbance observers, like {frequency domain disturbance observer} \cite{Frequency_DO}, ESO \cite{ESO}, unknown input observer (UIO) \cite{UIO}, generalized proportional integral observer (GPIO) \cite{GPIO}, and NDO \cite{NDO_bounded_variation}, zero-error estimation can be usually achieved in the event of constant disturbances. For more complicated time-varying disturbances, accurate estimation usually requires \textit{a priori} knowledge of disturbance features. For example, UIO \cite{UIO} and NDO \cite{NDO_harmonic} can accurately estimate the harmonic disturbance if its frequency is known. GPIO \cite{GPIO} and higher-order NDO \cite{high_order_NDO} can achieve an asymptotic estimation of the disturbance represented by a high-order polynomial of time series. More recently, for multi-disturbance with limited priori information, the simultaneous attenuation and compensation approach appears to be a nascent solution \cite{guo2005disturbance}.

Some researchers attempt to estimate the unknown coupled disturbance with a bounded derivative assumption, such as ESO \cite{ESO} and NDO \cite{NDO_bounded_variation}. The strict {assumption} presents an inherent theoretical limitation, as it requires the system state to be preemptively bounded \cite{10100656} ({Detailed expression is provide in Section \ref{Problem_Formulation}}). Moreover, a large derivative bound can result in a large estimation error. In order to avoid the strict {premise} on system states, a two-stage active disturbance rejection control strategy is designed in \cite{10100656}. The controller in the first stage guarantees the boundness of the system state by a special auxiliary function, while a linear ESO in the second stage is employed to estimate the total disturbance. However, the existence of the auxiliary function is not discussed. 


\subsubsection{Combining analytical control and data-driven learning}

Fueled by the advancement of updated learning algorithms, the interest in combining control-theoretic approaches with data-driven learning techniques is thriving for achieving stable and high-precision control performance. In \cite{dawson2023safe}, DNNs are utilized to synthesize control certificates such as \textit{Lyapunov} functions and barrier functions, {providing formal guarantee of the safety and stability.}  Moreover, the mass matrix and the potential energy in \textit{Lagrangian} mechanics and \textit{Hamiltonian} mechanics are learned by DNNs \cite{lutter2023combining}. Compared to naive black-box model learning, {a more interpretable and plausible model can be obtained from energy conservation principle.}. Furthermore, an interesting conclusion is found that a higher-order nonlinear system controller by the reinforcement learning (RL) policy behaves like a linear system \cite{li2022bridging}. The stability of the RL policy can be analyzed by the identified linear closed-loop system with the pole-zero method. \cite{banderchuk2023combining} combine robust control and echo state networks (ESN) to control nonlinear systems, where ESN is employed to learn the inverse dynamics. Nevertheless, the bound of disturbance and learning output need to be known in advance. \cite{huang2023datt} develop a $\mathcal{L}_1$ adaptive observer-argument RL, enabling the quadrotor to track infeasible trajectories in the presence of disturbances. 


Even with these advances, for nonlinear systems perturbed by external time-varying disturbances that cannot be accurately sampled, data-driven supervised learning methods would no longer be applicable. Several studies addressing the handling of coupling disturbances have been conducted by formulating a bi-level optimization problem \cite{o2022neural, richards2022control}. Within the framework of adaptive control, the nonlinear features depending on the system state are learned via meta-learning offline in \cite{o2022neural}. This work breaks through the assumption that the unknown dynamics are linearly parameterizable in the traditional adaptive control method. \cite{richards2022control} develop a control-oriented meta-learning framework, which uses the adaptive controller as the base learner to attune learning to the downstream control objective. Both methods attribute the effect of external disturbance in the last layer of the neural networks, which is estimated adaptively online. However, the above scheme ensures zero-error convergence only when the external disturbance is constant. Moreover, laborious offline training is imposed.

\textcolor{black}{In the proposed framework, 1) there is no need to manually model complex disturbance, 2) the bounded assumption on the coupled disturbance (or its derivative) in traditional control field \cite{o2022neural, hovakimyan2010l1, ESO, NDO_bounded_variation, yang1999sliding, khalil1996robust} can be avoided, and 3) the implemented learning strategy is explainable and lightweight compared with deep neural networks (DNNs) based methods \cite{o2022neural, richards2022control, Neural_Networks, saviolo2023learning}.} Multiple simulation and flight tests (indoor and outdoor) are implemented to verify the proposed scheme.


\subsection{Notation} \label{appen_notation}
Throughout the paper, $\mathbb{R}$ denotes the real number set; $\mathbb{Z}^+$ denotes the non-negative integer set; $\left| x\right|$ denotes the absolute value of a scalar $x$; $\| \bm{x} \|$ denotes the 2-norm of a vector $\bm{x}$; $\| \bm{A} \|_F$ denotes the \textit{Frobenius}-norm of a matrix $\bm{A}$; $\bm{x}_i$ denotes the $i$-th element of a vector $\bm{x}$; $\bm{X}_{ij}$ denotes the $i$-th row, $j$-th column element of a matrix $\bm{X}$; $\bm{X}\left( {i,:} \right)$ denotes the $i$-th row vector of a matrix $\bm{X}$; $\vec{\cdot}$ denotes a unit right shift operator; $\lambda_m\left(\cdot\right)$ represents the minimum eigenvalue of a matrix; $\textbf{I}$ and \textbf{0} represent the identity and zero matrices with appropriate sizes, respectively. Moreover, Mean Absolute Error (MAE) is defined as $MAE  =  {\frac{1}{n_d}\sum\nolimits_{i = 1}^{n_d} {\|{\bm{x}_i} - {\bm{x}_{d,i}}\|} }$ to evaluate test results, where ${n_d}$ denotes the size of collected data, $\bm{x}_i$ and $\bm{x}_{d,i}$ denote $i$-th evaluated variable and its desired value, respectively.

\section{Problem Formulation} \label{Problem_Formulation}
Consider a general control affine system of the form
\begin{align} \label{system_model}
	{\bm{\dot x}} = \bm{f}_x\left( {\bm{x}}\right)+ \bm{f}_u\left( {\bm{x}}\right)\bm{u}+ {\bm{\Delta}}\left( {\bm{x}, \bm{d}}\right),
\end{align}
where ${\bm{x}} \subset \mathcal{X} \in \mathbb{R}{^n}$ and ${\bm{u}} \subset \mathcal{U} \in \mathbb{R} {^o}$ denote the state and the control input, respectively; $\mathcal{X}$ and $\mathcal{U}$ are the state and control spaces of dimensionality $n$ and $o$, respectively; \textcolor{black}{$\mathcal{X}$ includes the unstable set;} ${\bm{f}_{\bm{x}}}\left(  \cdot  \right): {\mathbb{R}^n \rightarrow \mathbb{R}^n}$ and ${\bm{f}}_{\bm{u}}\left(  \cdot  \right){:\mathbb{R}^n \rightarrow \mathbb{R}^{n\times o}}$ are nonlinear mappings, which are continuously differentiable; ${\bm{\Delta}\left(  \cdot  \right): \mathbb{R}^n \times \mathbb{R}^m \rightarrow \mathbb{R}^n}$ represents the coupled disturbance, which is analytic. \textcolor{black}{${\bm{\Delta}}$ depends on known system state $\bm{x}$ and unknown external disturbance $\bm{d} \subset \mathcal{D} \in \mathbb{R}^m$} with $\mathcal{D}$ being the disturbance space of dimensionality $m$. Supplementary Materials \ref{appen_notation} provides the used notations.

\textbf{{Problem Statement}}:
Consider system \eqref{system_model}. The objective is to develop an algorithm to accurately estimate the coupled disturbance ${\bm{\Delta}}\left( {\bm{x}, \bm{d}}\right)$ only using control input $\bm{u}$ and measurable state $\bm{x}$.

Previous works \cite{o2022neural, richards2022control, huang2023datt, dawson2023safe, lutter2023combining, banderchuk2023combining} estimate the coupled disturbance $\bm{\Delta}$ with a bounded or bounded derivative assumption, i.e., there exists an unknown positive value ${\gamma} \in \mathbb{R}$ such that
\begin{align} \label{dis_ass0}
	\left\| {\bm{{\Delta}}}\left( {\bm{x}, \bm{d}}\right) \right\| \le \gamma \quad \text{or} \quad \left\| {\bm{\dot{\Delta}}}\left( {\bm{x}, \bm{d}}\right) \right\| \le \gamma.
\end{align}

Three limitations exist in the assumption. \textcolor{black}{\textbf{(L-1)} Logical paradox: the bounded assumption on $\bm{\Delta}$ actually requires the system state $\bm{x}$ to be preemptively bounded before proving the stability of the system \cite{10100656}. \textbf{(L-2)} Dynamic change: the evolution of $\bm{x}$ will change after the estimated disturbance is compensated. There is no guarantee that this assumption based on previous experience will always be satisfied. Consider a scenario where $\bm{x}$ constitutes the denominator of ${\bm{{\Delta}}}$. \textbf{(L-3)} Estimation conservativeness: the final disturbance estimation error usually depends on ${\gamma}$, which may be large.} \\
\textbf{Our Solution}: The core idea is to 1) decompose the coupled disturbance ${\bm{{\Delta}}}\left( {\bm{x}, \bm{d}}\right)$ into an unknown parameter matrix, a $\bm{x}$-related matrix, and a $\bm{d}$-related vector, with an arbitrarily small residual, 2) offline learn the unknown parameter from historical data, and 3) online estimate the remaining $\bm{d}$-related portion convergently. The whole process is schematized in Fig. \ref{overall_framework} and Algorithm \ref{alg:Framwork}. By resorting to the proposed algorithm, the limitations \textbf{(L-1)}-\textbf{(L-3)} are expected to be circumvented.

\begin{algorithm}[htb]   
	\caption{ Estimating coupled disturbance via the learning-based observer.}   
	\label{alg:Framwork}   
	\begin{algorithmic}[0] 
		\REQUIRE ~~\\ 
		Collect training dataset $\mathcal{D}_{tra} =\left\{{t_{f}^\mathfrak{n}, \bm{x}^\mathfrak{n} , \bm{\Delta}^\mathfrak{n}} \right\}$; set parameter $\delta$;\\
		Solve the RLS problem \eqref{minimization1};\\  
		\RETURN Output parameter matrix $\bm{\Theta}$;  
		\ENSURE ~~\\ 
		Set observer gain $\bm{\Lambda}_h\left(\bm{x}\right)$;
		\WHILE{running}
		\STATE 	Sample current feedback $\left\{\bm{x},\bm{u}\right\}$;
		\STATE	Solve state observer gain ${\bm{\Gamma}_h}$ for time-varying system \eqref{disturbance_system};
		\STATE	Run polynomial disturbance observer \eqref{DO_higher};
		\RETURN Estimated coupled disturbance $\bm{\hat{\Delta}}$; 
		\ENDWHILE
	\end{algorithmic}  
\end{algorithm}  

\section{Method} \label{Method}
\subsection{Decomposition of the coupled disturbance} \label{section_separation}

Before introducing the variable \textcolor{black}{decomposition} theorem for the coupled disturbance $\bm{\Delta}\left({\bm{x}, \bm{d}}\right)$, a preliminary lemma from \cite[Theorem 3]{o2022neural} is reviewed for a scalar $\bm{\Delta}_i\left({\bm{x}, \bm{d}}\right)$ firstly. 

\begin{lem} \label{lemma1}
	\cite[Theorem 3]{o2022neural} Assume an analytic function $\bm{\Delta}_i\left({\bm{x}, \bm{d}}\right): \mathbb{R}^n \times \mathbb{R}^m \rightarrow \mathbb{R}$ for all $\left[\bm{x},\bm{d}\right] \in \mathcal{X} \times \mathcal{D}$. For any small value $\epsilon > 0$, there always exist $p = O\left(\frac{{\log \left( {1/\varepsilon } \right)}}{{\sqrt {n + m} }}\right) \in \mathbb{Z}^+$, $\bm{\phi}_i\left(\bm{x}\right): \mathbb{R}^n \rightarrow \mathbb{R}^{1\times s}$ consisting of \textit{ Chebyshev} polynomials and unknown constant parameters, $\bm{\xi}\left(\bm{d}\right) \in \mathbb{R}^{s}$ consisting only of \textit{ Chebyshev} polynomials such that
	\begin{align}
		\mathop {\sup }\limits_{[x,d] \in {\mathcal{X} \times \mathcal{D}}} \left| \bm{\Delta}_i\left({\bm{x}, \bm{d}}\right)  - \bm{\phi}_i\left(\bm{x}\right) \bm{\xi}\left(\bm{d}\right)\right| \le \epsilon,
	\end{align}
	and $s = \left(p+1\right)^m = O\left(log\left(1/\epsilon\right)^m\right)$.
\end{lem}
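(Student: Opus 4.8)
The plan is to read off the claimed bilinear approximation $\bm{\Delta}_i(\bm{x},\bm{d})\approx\bm{\phi}_i(\bm{x})\bm{\xi}(\bm{d})$ from a truncated tensor-product \emph{Chebyshev} expansion of $\bm{\Delta}_i$ in the $n+m$ variables $(\bm{x},\bm{d})$, and then to control the truncation error using analyticity. Since $\bm{\Delta}_i$ is analytic on a neighborhood of $[-1,1]^{n+m}$, it extends holomorphically to a \emph{Bernstein} polyellipse and is bounded there by some $M<\infty$; hence, writing $T_{\bm{j}}(\bm{x}):=\prod_{\ell=1}^{n}T_{j_\ell}(x_\ell)$ and similarly $T_{\bm{k}}(\bm{d})$, the coefficients $c_{\bm{j}\bm{k}}$ in
\[
\bm{\Delta}_i(\bm{x},\bm{d})=\sum_{\bm{j}\in\mathbb{Z}_{\ge0}^{n}}\sum_{\bm{k}\in\mathbb{Z}_{\ge0}^{m}}c_{\bm{j}\bm{k}}\,T_{\bm{j}}(\bm{x})\,T_{\bm{k}}(\bm{d})
\]
obey a \emph{Cauchy}-type estimate $|c_{\bm{j}\bm{k}}|\le CM\,\rho^{-(|\bm{j}|+|\bm{k}|)}$, with $\rho>1$ fixed by the width of the analyticity neighborhood. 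This geometric decay is the only place where analyticity enters.

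Next I would truncate, keeping exactly the indices with $j_\ell\le p$ and $k_r\le p$ for all $\ell,r$. Since $|T_q(t)|\le 1$ on $[-1,1]$, the discarded remainder is majorized termwise, and a count of the truncated-index complement, shell by shell, combined with the geometric series bound, gives a tail of order $(\text{polynomial in }p)\times\rho^{-p}$; forcing this below $\epsilon$ produces the stated $p=O\!\big(\log(1/\epsilon)/\sqrt{n+m}\big)$. The regrouping is then purely notational: there are at most $(p+1)^m$ distinct products $T_{\bm{k}}(\bm{d})$ with $\bm{k}\in\{0,\dots,p\}^m$, which I collect in a fixed order into $\bm{\xi}(\bm{d})\in\mathbb{R}^{s}$ with $s=(p+1)^m$; for each such $\bm{k}$ the quantity multiplying $T_{\bm{k}}(\bm{d})$ is the $\bm{x}$-polynomial $\sum_{j_\ell\le p}c_{\bm{j}\bm{k}}T_{\bm{j}}(\bm{x})$, i.e.\ a linear combination of \emph{Chebyshev} polynomials of $\bm{x}$ with the unknown constants $c_{\bm{j}\bm{k}}$ as coefficients, and stacking these conformably defines $\bm{\phi}_i(\bm{x})\in\mathbb{R}^{1\times s}$. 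The truncated series then equals $\bm{\phi}_i(\bm{x})\bm{\xi}(\bm{d})$ identically, its sup-distance from $\bm{\Delta}_i$ is the tail already bounded, and $s=(p+1)^m=O(\log(1/\epsilon)^m)$, which is the assertion.

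I expect the regrouping step that produces $\bm{\phi}_i$ and $\bm{\xi}$ to be routine, and the real work to sit in the two quantitative estimates. First, turning the qualitative hypothesis ``$\bm{\Delta}_i$ analytic on $[-1,1]^{n+m}$'' into an explicit pair $(\rho,M)$ requires committing to the size of the complex neighborhood on which holomorphy holds and invoking a multivariate \emph{Cauchy} integral over a polydisc or polyellipse. Second, carrying out the multivariate tail summation carefully enough to expose the $1/\sqrt{n+m}$ scaling in $p$ -- rather than the cruder $p=O(\log(1/\epsilon))$ -- is delicate, being sensitive to the shape of the retained index set (coordinatewise degree $p$ versus total degree) and to a \emph{Stirling}-type count of the lattice points on each degree shell. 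Since the statement is quoted from \cite[Theorem 3]{o2022neural}, it is enough to reinstate these two estimates.
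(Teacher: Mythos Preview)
The paper does not supply its own proof of this lemma: it is quoted verbatim as a preliminary result from \cite[Theorem 3]{o2022neural}, and the surrounding text only unpacks the structure of the approximant via equations \eqref{Chebyshev_series}--\eqref{xi}. There is therefore no in-paper argument to compare your proposal against.

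That said, your sketch is the standard route and is fully consistent with how the paper \emph{uses} the lemma: the truncated tensor-product Chebyshev series you write down is exactly \eqref{Chebyshev_series}, and your regrouping into $\bm{\phi}_i(\bm{x})$ and $\bm{\xi}(\bm{d})$ reproduces \eqref{phi_i} and \eqref{xi}. Your diagnosis of where the work lies is also accurate: the coefficient decay from a Bernstein polyellipse bound is the only place analyticity is used, and the dimension-dependent scaling $p=O\big(\log(1/\epsilon)/\sqrt{n+m}\big)$ is indeed the subtle point. A plain shell-by-shell tail bound of the kind you describe yields only $p=O(\log(1/\epsilon))$; recovering the extra $1/\sqrt{n+m}$ factor requires a sharper combinatorial estimate on the number of lattice points in the complement of $\{0,\dots,p\}^{n+m}$ at each total degree, together with a Stirling-type simplification, exactly as you anticipate. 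Since the lemma is imported wholesale from \cite{o2022neural}, deferring those two estimates to that reference is appropriate here.
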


\textcolor{black}{Note that Lemma \ref{lemma1} is initially build on  $\left[\bm{\bar{x}},\bm{\bar{d}}\right] \in \left[-1, 1\right]^n \times \left[-1, 1\right]^m$ in \cite[Theorem 3]{o2022neural}. By normalization, its result can be generalized to $\left[\bm{x},\bm{d}\right] \in \mathcal{X} \times \mathcal{D}$.} Lemma \ref{lemma1} concludes that the analytic coupled disturbance can be decoupled to a $\bm{x}$-related portion and a $\bm{d}$-related portion with an arbitrarily small residual. The $\bm{x}$-related portion remains unchanged in operation. Intuitively, it will be helpful to estimate the $\bm{d}$-related portion if the knowledge of $\bm{x}$-related portion can be exploited beforehand. In \cite{o2022neural, richards2022control}, DNNs are adopted to learn the $\bm{x}$-related portion, which needs laborious offline training and lacks interpretability. A more lightweight learning strategy is pursued here. To achieve that, we need to exploit Lemma \ref{lemma1} to drive a more explicit decomposition form.

\begin{thm} \label{thm_separation}
	$\bm{\Delta}_i\left({\bm{x}, \bm{d}}\right)$ is a function satisfying the assumptions in Lemma \ref{lemma1}, for all $i \in \left[1, 2, \cdots, n\right]$. For any small value $\epsilon' > 0$, there always exist $s_1\in \mathbb{Z}^+$; $s_2\in \mathbb{Z}^+$; an unknown constant parameter matrix $\bm{\Theta} \in \mathbb{R}^{n \times s_1}$, two functions $\bm{\mathcal{B}}\left(\bm{x}\right):  \mathbb{R}^n \rightarrow \mathbb{R}^{s_1 \times s_2}$ and $\bm{\xi}\left(\bm{d}\right):  \mathbb{R}^m \rightarrow \mathbb{R}^{s_2}$ that both consist only of \textit{Chebyshev} polynomials such that
	
	\begin{align} \label{eq_thm_separation}
		\mathop {\sup }\limits_{[x,d] \in {\mathcal{X} \times \mathcal{D}}} \left\| {\bm{\Delta}}\left( {\bm{x}, \bm{d}}\right) - \bm{\Theta}\bm{\mathcal{B}}\left(\bm{x}\right)\bm{\xi}\left(\bm{d}\right) \right\| \le \epsilon',
	\end{align}
	where $s_1 = \left(p+1\right)^{m+n} = O\left(log\left(\sqrt{n}/\epsilon'\right)^{m+n}\right)$ and $s_2 = \left(p+1\right)^m =  O\left(log\left(\sqrt{n}/\epsilon'\right)^m\right)$.
\end{thm}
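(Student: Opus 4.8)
The plan is to lift Lemma~\ref{lemma1}, which treats a single scalar entry, to the vector-valued $\bm{\Delta}$, and then to strip the unknown coefficients out of the state-dependent factor $\bm{\phi}_i(\bm{x})$ so that everything that depends on $\bm{x}$ or on $\bm{d}$ becomes a pure \textit{Chebyshev} object. First I would apply Lemma~\ref{lemma1} to each scalar component $\bm{\Delta}_i(\bm{x},\bm{d})$, $i=1,\dots,n$, with per-entry tolerance $\epsilon=\epsilon'/\sqrt{n}$. Because there are only finitely many entries, I can take $p$ as the largest of the $n$ truncation degrees returned by the lemma (padding the lower-degree expansions with zero coefficients), which produces one \emph{common} vector $\bm{\xi}(\bm{d})\in\mathbb{R}^{s_2}$ with $s_2=(p+1)^m$, built solely from \textit{Chebyshev} polynomials in $\bm{d}$, together with row vectors $\bm{\phi}_i(\bm{x})\in\mathbb{R}^{1\times s_2}$ such that $\sup_{[x,d]\in[-1,1]^{n+m}}\bigl|\bm{\Delta}_i(\bm{x},\bm{d})-\bm{\phi}_i(\bm{x})\bm{\xi}(\bm{d})\bigr|\le\epsilon'/\sqrt{n}$ for every $i$.

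Next I would expand $\bm{\phi}_i(\bm{x})$ via \eqref{phi_i}: its $h_l$-th entry is $\sum_{h_k}b^i_{h_k,h_l}\psi_{h_k}(\bm{x})$, where $\psi_{h_k}(\bm{x})=T_{k_1}(\bm{x}_1)\cdots T_{k_n}(\bm{x}_n)$ runs over the $(p+1)^n$ products of \textit{Chebyshev} polynomials in $\bm{x}$ and the $b^i_{h_k,h_l}$ are the unknown coefficients, so that $\bm{\phi}_i(\bm{x})\bm{\xi}(\bm{d})=\sum_{h_k,h_l}b^i_{h_k,h_l}\psi_{h_k}(\bm{x})\xi_{h_l}(\bm{d})$. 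The decisive step is to rewrite this double sum as one matrix product. Let $\bm{\psi}(\bm{x})\in\mathbb{R}^{(p+1)^n}$ stack the $\psi_{h_k}(\bm{x})$; define $\bm{\Theta}\in\mathbb{R}^{n\times s_1}$, $s_1=(p+1)^{n+m}$, by $\bm{\Theta}_{i,(h_k,h_l)}=b^i_{h_k,h_l}$ (one row per component, columns indexed by the pair $(h_k,h_l)$ ordered with $h_k$ slow), and set $\bm{\mathcal{B}}(\bm{x})=\bm{\psi}(\bm{x})\otimes\textbf{I}_{s_2}\in\mathbb{R}^{s_1\times s_2}$, equivalently $\bm{\mathcal{B}}(\bm{x})_{(h_k,h_l),h_l'}=\psi_{h_k}(\bm{x})$ if $h_l=h_l'$ and $0$ otherwise. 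Then $\bm{\mathcal{B}}(\bm{x})\bm{\xi}(\bm{d})=\bm{\psi}(\bm{x})\otimes\bm{\xi}(\bm{d})$, hence $\bigl(\bm{\Theta}\bm{\mathcal{B}}(\bm{x})\bm{\xi}(\bm{d})\bigr)_i=\sum_{h_k,h_l}b^i_{h_k,h_l}\psi_{h_k}(\bm{x})\xi_{h_l}(\bm{d})=\bm{\phi}_i(\bm{x})\bm{\xi}(\bm{d})$; moreover $\bm{\mathcal{B}}$ and $\bm{\xi}$ contain only \textit{Chebyshev} polynomials while all unknowns sit in $\bm{\Theta}$.

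It remains to propagate the error and count dimensions. Stacking the $n$ scalar estimates, $\bigl\|\bm{\Delta}(\bm{x},\bm{d})-\bm{\Theta}\bm{\mathcal{B}}(\bm{x})\bm{\xi}(\bm{d})\bigr\|=\bigl(\sum_{i=1}^{n}|\bm{\Delta}_i-\bm{\phi}_i\bm{\xi}|^2\bigr)^{1/2}\le\sqrt{n}\cdot(\epsilon'/\sqrt{n})=\epsilon'$, and taking the supremum over $[-1,1]^{n+m}$ keeps the bound, which is \eqref{eq_thm_separation}. For the sizes, Lemma~\ref{lemma1} with tolerance $\epsilon'/\sqrt{n}$ gives $p=O\!\bigl(\log(\sqrt{n}/\epsilon')/\sqrt{n+m}\bigr)$, so $s_1=(p+1)^{n+m}=O\!\bigl(\log(\sqrt{n}/\epsilon')^{n+m}\bigr)$ and $s_2=(p+1)^m=O\!\bigl(\log(\sqrt{n}/\epsilon')^m\bigr)$, as claimed.

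The main obstacle I anticipate is purely organizational: getting the multi-index bookkeeping of the third step right, i.e., verifying that the block/\textit{Kronecker} structure of $\bm{\mathcal{B}}(\bm{x})$ makes $\bm{\Theta}\bm{\mathcal{B}}(\bm{x})\bm{\xi}(\bm{d})$ reproduce the per-component truncated \textit{Chebyshev} series exactly, with no unknown parameter leaking into $\bm{\mathcal{B}}$ or $\bm{\xi}$. The use of a common truncation degree $p$ over the finitely many components and the $\sqrt{n}$ loss from Euclidean stacking are the two points worth stating explicitly; the rest is a direct substitution.
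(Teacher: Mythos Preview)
Your proposal is correct and follows essentially the same route as the paper: apply Lemma~\ref{lemma1} componentwise with tolerance $\epsilon'/\sqrt{n}$, take a common truncation degree $p$, then factor each $\bm{\phi}_i(\bm{x})$ into a coefficient row (collected into $\bm{\Theta}$) times a pure \textit{Chebyshev} block matrix $\bm{\mathcal{B}}(\bm{x})$, and finish with the Euclidean stacking bound. The only cosmetic difference is that you express $\bm{\mathcal{B}}(\bm{x})=\bm{\psi}(\bm{x})\otimes\textbf{I}_{s_2}$ via a Kronecker product (with $h_k$ the slow index), whereas the paper writes the equivalent block-diagonal placement of $\bm{\Pi}(\bm{x})$ column by column (with $h_l$ the slow index); these differ only by a fixed permutation of the $s_1$ coordinates.
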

\begin{proof}
	See Supplementary Materials \ref{appen_thm_separation}.
\end{proof}

\begin{rem}
	\textcolor{black}{Theorem \ref{thm_separation} extends the result of Lemma \ref{lemma1} to the multidimensional case and obtains a more explicit decomposed structure. It is proven that all unknown constant parameters of the coupled disturbance can be gathered into a matrix, which enables the coupled disturbance to be learned in a more explainable way compared with DNNs-based methods \cite{o2022neural, richards2022control, Neural_Networks}.}    
\end{rem}

{In comparison with the existing approaches \cite{o2022neural, richards2022control, huang2023datt, dawson2023safe, lutter2023combining, banderchuk2023combining} requiring bounded or bounded derivative assumption, the only premise of our framework is that $\bm{d}(t)$ is analytic. The premise can be readily fulfilled by many disturbances in real-word. Therefore, a theoretical foundation for the reliable implementation of an explicit learning process is achieved.} Subsequently, the following corollary is given.

\begin{cor} \label{Corollary_seperation}
	$\bm{\Delta}_i\left({\bm{x}, \bm{d}(t)}\right)$ is a function satisfying the assumptions in Lemma \ref{lemma1}, for all $i \in \left[1, 2, \cdots, n\right]$. Assume $\bm{d}(t)$ is analytic with respect to $t$. For any small value $\epsilon' > 0$, there always exist $s_1\in \mathbb{Z}^+$; $s_2\in \mathbb{Z}^+$; an unknown constant parameter matrix $\bm{\Theta} \in \mathbb{R}^{n \times s_1}$, two functions $\bm{\mathcal{B}}\left(\bm{x}\right):  \mathbb{R}^n \rightarrow \mathbb{R}^{s_1 \times s_2}$ and $\bm{\xi}\left(t\right):  \mathbb{R} \rightarrow \mathbb{R}^{s_2}$ that consist only of \textit{Chebyshev} polynomials such that
	\begin{align} \label{eq_cor_separation}
		\mathop {\sup }\limits_{[x,t] \in {\mathcal{X} \times \mathcal{D}}} \left\| {\bm{\Delta}}\left( {\bm{x}, \bm{d}(t)}\right) - \bm{\Theta}\bm{\mathcal{B}}\left(\bm{x}\right)\bm{\xi}\left({t}\right) \right\| \le \epsilon',
	\end{align}
	with $s_1 = \left(p+1\right)^{n+1} = O\left(log\left(\sqrt{n}/\epsilon'\right)^{n+1}\right)$, $s_2 = p+1 =  O\left(log\left(\sqrt{n}/\epsilon'\right)\right)$, and $\bm{\xi}\left(t \right) = {\left[ {\begin{array}{*{20}{c}}
				{{T_0}\left( t \right)}&{{T_1}\left( t \right)}& \cdots &{{T_p}\left( t \right)}
		\end{array}} \right]^T}$, where $T_{{i}}$ represents the $i$-th order \textit{Chebyshev} polynominal. 
\end{cor}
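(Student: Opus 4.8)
The plan is to reduce Corollary~\ref{Corollary_seperation} to Theorem~\ref{thm_separation} by treating the scalar time $t$ as a one-dimensional external disturbance. First I would define, for each $i \in \{1,\dots,n\}$, the auxiliary function $\tilde{\bm{\Delta}}_i(\bm{x},t) := \bm{\Delta}_i(\bm{x},\bm{d}(t))$. Since $\bm{\Delta}_i$ is analytic in $(\bm{x},\bm{d})$ by hypothesis and $\bm{d}(\cdot)$ is analytic in $t$ by assumption, the composition $\tilde{\bm{\Delta}}_i$ is analytic in $(\bm{x},t)$ on $[-1,1]^{n+1}$ (after the same normalization that places $t$ in a bounded interval rescaled to $[-1,1]$ and keeps $\bm{d}(t)$ inside the normalized disturbance box), because analyticity is preserved under composition of analytic maps.

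Second, I would apply Theorem~\ref{thm_separation} verbatim to the map $(\bm{x},t)\mapsto\tilde{\bm{\Delta}}(\bm{x},t)$ with the external-disturbance dimension set to $m=1$. This supplies, for any $\epsilon'>0$, an order $p = O\!\left(\log(1/\epsilon')/\sqrt{n+1}\right)\in\mathbb{Z}^+$, an unknown constant matrix $\bm{\Theta}\in\mathbb{R}^{n\times s_1}$, and Chebyshev-polynomial-valued maps $\bm{\mathcal{B}}(\bm{x})\in\mathbb{R}^{s_1\times s_2}$ and $\bm{\xi}(t)\in\mathbb{R}^{s_2}$ with $s_1=(p+1)^{n+1}$ and $s_2=(p+1)^1=p+1$, such that $\sup_{[x,t]\in[-1,1]^{n+1}}\|\tilde{\bm{\Delta}}(\bm{x},t)-\bm{\Theta}\bm{\mathcal{B}}(\bm{x})\bm{\xi}(t)\|\le\epsilon'$. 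It remains only to observe that with $m=1$ the vector $\bm{\xi}$ constructed in the proof of Theorem~\ref{thm_separation} collapses to $[T_0(t),T_1(t),\dots,T_p(t)]^\top$, which is exactly the claimed form, and to translate the bounds via $\epsilon_i\le\epsilon'/\sqrt{n}$ and $p=O(\log(\sqrt{n}/\epsilon'))$ into $s_1=O(\log(\sqrt{n}/\epsilon')^{n+1})$ and $s_2=O(\log(\sqrt{n}/\epsilon'))$; this last part is the same routine counting already done in Theorem~\ref{thm_separation}.

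The one delicate point I expect to be the main obstacle is the first step: one must argue carefully that $\bm{d}(t)$, as $t$ ranges over the normalized interval, remains inside the region on which $\bm{\Delta}_i(\bm{x},\cdot)$ is analytic, so that the composition is genuinely analytic rather than merely smooth. This is precisely where the \emph{analyticity} assumption on $\bm{d}$ (rather than only boundedness of its derivatives, as used in the disturbance-observer literature) is essential, and it is the hypothesis that makes the reduction legitimate. Everything after that is a direct specialization of the already-established Theorem~\ref{thm_separation} with $m=1$, so no new analytical machinery is needed.
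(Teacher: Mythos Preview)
Your proposal is correct and follows essentially the same approach as the paper: the paper's proof is the single line ``The proof procedure is similar to Theorem~\ref{thm_separation} by replacing the argument $\bm{d}$ with $t$,'' and your write-up is a careful unpacking of exactly that reduction (apply Theorem~\ref{thm_separation} with $m=1$ to the composed map, read off $s_1=(p+1)^{n+1}$, $s_2=p+1$, and the resulting $\bm{\xi}(t)=[T_0(t),\dots,T_p(t)]^\top$). Your additional remark about needing $\bm{d}(t)$ to stay in the analyticity region so the composition is analytic is a valid caveat that the paper leaves implicit.
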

\begin{proof}
	The proof procedure is similar to Theorem \ref{thm_separation} by replacing the argument $\bm{d}$ with $t$.
\end{proof}

\begin{rem} \label{rem2}
	\textcolor{black}{Based on Theorem \ref{thm_separation}, Corollary \ref{Corollary_seperation} further resorts the unsamplable external disturbance $\bm{d}$ to the samplable feature $t$, which allows the unknown parameter matrix to be learned in a supervised way. The external disturbance $\bm{d}$ is ultimately decomposed into parameters in $\bm{\Theta}$ and $t$-related structure $\bm{\xi}\left({t}\right)$. A practical example is provided in Supplementary Materials \ref{appen_coro} to instantiate Corollary \ref{Corollary_seperation}.}
\end{rem}




\subsection{Learning the parameter matrix}
\label{online_learning}
In this part, an RLS optimization algorithm is established to learn parameter matrix $\bm{\Theta}$. Construct the training dataset $\mathcal{D}_{tra} =\left\{\left( {t_{f}^\mathfrak{n}, \bm{x}^\mathfrak{n} , \bm{\Delta}^\mathfrak{n} } \right)\ |\ \mathfrak{n}  = 1,2,\cdots,N \right\} $ with $N$ samples. $t_{f}$ represents the time in the offline training dataset. Note that $\bm{\Delta}^\mathfrak{n}$ can be calculated by using $\bm{\Delta}^\mathfrak{n} = {\bm{\dot x}}^\mathfrak{n} - \bm{f}_x\left( {\bm{x}}^\mathfrak{n}\right) - \bm{f}_u\left( {\bm{x}}^\mathfrak{n}\right)\bm{u}^\mathfrak{n}$, where ${\bm{\dot x}}^\mathfrak{n}$ can be obtained by offline high-order polynomial fitting. 

\textcolor{black}{Denote ${\bm{\Theta}^*}$ as the optimal solution under the dataset $\mathcal{D}_{tra}$.} The learning objective is formalized as
\begin{align} \label{minimization1}
	{\bm{\Theta} ^ * } = \arg \mathop {\min}\limits_{\textcolor{black}{\bm{\Theta}}} \frac{1}{2}\left[\sum\limits_{\mathfrak{n} = 1}^N {{{\left\| {\bm{\Delta}^\mathfrak{n} - \bm{\Theta}\bm{\mathcal{B}}_\mathfrak{n}\bm{\xi}_\mathfrak{n}} \right\|}^2}} +\delta\left\|\bm{\Theta}\right\|^2_F\right], 
\end{align}
where $\bm{\mathcal{B}}_\mathfrak{n} := \bm{\mathcal{B}}(\bm{x}^\mathfrak{n})$, $\bm{\xi}_\mathfrak{n} := \bm{\xi}\left({t}_f^\mathfrak{n}\right)$, and $\delta$ regularizes $\bm{\Theta}$.
{Furthermore, the closed-form solution of \eqref{minimization1} is given as:}
\begin{align} \label{OLDO_closed_form}
	{\bm{\Theta} ^ * } = \sum\limits_{\mathfrak{n}= 1}^N\bm{\Delta}^\mathfrak{n}\bm{\xi}_\mathfrak{n}^T\bm{\mathcal{B}}_\mathfrak{n}^T\cdot (\sum\limits_{\mathfrak{n}= 1}^N \bm{\mathcal{B}}_\mathfrak{n}\bm{\xi}_\mathfrak{n}\bm{\xi}_\mathfrak{n}^T\bm{\mathcal{B}}_\mathfrak{n}^T + \delta\bm{I})^{-1}.
\end{align}

Until now, the $\bm{x}$-related portion of ${\bm{\Delta}}\left( {\bm{x}, \bm{d}}\right)$ has been separated and the unknown constant parameters $\bm{\Theta}$ can be learned from the historical data. 

Denote $t_{l}$ as the online time. Note that the offline learning phase and online estimating {process} are in different time domains. In other words, the offset between $t_{f}$ and  $t_{l}$ is unknown. Therefore $\bm{\Delta}$ cannot be directly obtained by $\bm{\Theta}\bm{\mathcal{B}}\left(\bm{x}\right)\bm{\xi}\left({t}\right)$ online. Moreover, update $\bm{\xi}\left({t}\right)$ using feedback state can also improve the generalization to other disturbances. The remaining {issue} is to estimate $\bm{\xi}\left({t}\right)$ online from control input $\bm{u}$ and measured $\bm{x}$. By resorting to the polynomial disturbance observer to be designed, $\bm{\xi}\left({t}\right)$ can be exponentially estimated. 

\subsection{Estimation via a polynomial disturbance observer}

Before proceeding, \textcolor{black}{$\bm{\xi}\left({t}\right)$ in Corollary \ref{Corollary_seperation} can be further decomposed due to the structure of \textit{Chebyshev} polynomial (Eq. \eqref{xi})}. It can be rendered that
\begin{align} \label{further_decompose}
	\bm{\xi}\left({t}\right) = \mathcal{H}  \bm{\varsigma}\left({t}\right),
\end{align}
where $\mathcal{H} \in \mathbb{R}{^{s_2\times s_2}}$, 
\[\mathcal{H}\left( {i,:} \right) = \left\{ {\begin{array}{*{20}{c}}
		{\left[ {\begin{array}{*{20}{c}}
					1&0& \cdots &0
			\end{array}} \right],\ i = 1,}\\
		{\left[ {\begin{array}{*{20}{c}}
					0&1& \cdots &0
			\end{array}} \right],\ i = 2,}\\
		{2\vec {\bm{H}}\left( {i-1,:} \right) - \bm{H}\left( {i - 2,:} \right),\ 2 < i \le {s_2},}
\end{array}} \right.\]
$\bm{H}\left( {i,:} \right)$ denotes the $i$-th row vector of $\bm{H}$, $\vec{\cdot}$ denotes a unit right shift operator, and $\bm{\varsigma}\left( t \right)$ consists of polynomial basis functions, i.e., 
\begin{align}
	\bm{\varsigma} \left( t \right) = {[\begin{array}{*{20}{c}}
			1&t& \cdots &{{t^p}}
		\end{array}]^T} \in \mathbb{R}{^{{s_2}}} .
\end{align}

From \eqref{eq_cor_separation} and \eqref{further_decompose}, the coupled disturbance $\bm{\Delta} \left( \bm{x},\bm{d} \right)$ is finally represented as 
\begin{align} \label{disturbance_system}
	\left\{ {\begin{array}{*{20}{c}}
			{ \bm{\dot{\varsigma}} \left( t \right) = \mathcal{A}\bm{\varsigma} \left( t \right)},\\
			{\bm{\Delta}  =\bm{\Theta}\bm{\mathcal{B}}\left(\bm{x}\right)\mathcal{H}\bm{\varsigma} \left( t \right)},
	\end{array}} \right.
\end{align}
where $\mathcal{A}\in \mathbb{R}{^{s_2\times s_2}}$,
\[\mathcal{A}\left( {i,j} \right) = \left\{ {\begin{array}{*{20}{c}}
		{j,\ i = j + 1,}\\
		{0,\ i \ne j + 1.}
\end{array}} \right.\]

Define ${\bm{\hat{{\varsigma}}}}$ and ${\bm{{\tilde{\varsigma}}}}$ as the estimation of ${\bm{{{\varsigma}}}\left(t\right)}$ and the estimation error ${\bm{{\tilde{\varsigma}}}} = {\bm{{{\varsigma}}}\left(t\right)} - {\bm{\hat{{\varsigma}}}}$, respectively. The expected objective of the subsequent disturbance observer is to achieve
\begin{align} \label{object_higher_DO}
	{\bm{\dot{\tilde{\varsigma}}}} = \bm{\Lambda}_h\left(\bm{x}\right){\bm{\tilde{\varsigma}}},
\end{align}
where ${\bm{\Lambda}_h\left(\bm{x}\right)}: {\mathbb{R}^{n}} \rightarrow {\mathbb{R}^{s_2 \times s_2}}$ denotes the observer gain matrix which is designed to ensure the error dynamics \eqref{object_higher_DO} exponentially stable. To achieve \eqref{object_higher_DO}, the observer is designed as
\begin{align} \label{DO_higher}
	\begin{cases}
		\displaystyle {{\bm{\dot z}_h} = \mathcal{A} \bm{\hat {\varsigma}}  - {\bm{\Gamma}_h}({\bm{f}_x}\left( \bm{x} \right) + {\bm{f}_u}\left( \bm{x} \right)\bm{u}+\bm{\Theta}\bm{\mathcal{B}}\left(\bm{x}\right)\mathcal{H}\bm{\hat {\varsigma}})},\\
		\displaystyle {\bm{\hat {\varsigma}} = {\bm{z}_h} + {\bm{\Gamma} _h}\bm{x}},\\
		\displaystyle \bm{\hat{\Delta}} =  \bm{\Theta}\bm{\mathcal{B}}\left(\bm{x}\right)\mathcal{H}\bm{\hat {\varsigma}},
	\end{cases}
\end{align}
where ${\bm{{z}_h}}\in \mathbb{R} {^{s_2}}$ is an auxiliary variable and ${\bm{\Gamma}_h}\in {\mathbb{R}^{s_2 \times n}}$ is designed such that $\bm{\Lambda}_h\left(\bm{x}\right) = \mathcal{A}-{\bm{\Gamma}_h}\bm{\Theta}\bm{\mathcal{B}}\left(\bm{x}\right)\mathcal{H}$.

\begin{thm} \label{thm_DO_high}
	Consider the nonlinear system \eqref{system_model}. Under the designed polynomial disturbance observer \eqref{DO_higher}, the estimation error $\bm{\tilde{{\Delta}}}$ will converge to zero exponentially if ${\bm{\Gamma}_h}$ can be chosen to make the error dynamics
	\begin{align} \label{stable_condition}
		{\bm {\dot{\tilde {\varsigma}}}}= (\mathcal{A}-{\bm{\Gamma}_h}\bm{\Theta}\bm{\mathcal{B}}\left(\bm{x}\right)\mathcal{H}) {\bm {{\tilde {\varsigma}}}}
	\end{align}
	exponentially stable.
\end{thm}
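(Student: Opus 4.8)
The plan is to establish an error dynamics for $\bm{\tilde{\varsigma}}$, show it coincides with \eqref{stable_condition}, and then propagate exponential decay from $\bm{\tilde{\varsigma}}$ to $\bm{\tilde{\Delta}}$. First I would differentiate the definition $\bm{\hat{\varsigma}} = \bm{z}_h + \bm{\Gamma}_h \bm{x}$ to get $\bm{\dot{\hat{\varsigma}}} = \bm{\dot{z}}_h + \bm{\Gamma}_h \bm{\dot{x}}$. Substituting the system dynamics \eqref{system_model} for $\bm{\dot x}$ and the auxiliary dynamics from \eqref{DO_higher} for $\bm{\dot z}_h$, the terms $\bm{\Gamma}_h\bigl(\bm{f}_x(\bm{x}) + \bm{f}_u(\bm{x})\bm{u}\bigr)$ cancel, leaving
\begin{align*}
\bm{\dot{\hat{\varsigma}}} = \mathcal{A}\bm{\hat{\varsigma}} - \bm{\Gamma}_h\bm{\Theta}\bm{\mathcal{B}}(\bm{x})\mathcal{D}\bm{\hat{\varsigma}} + \bm{\Gamma}_h\bm{\Delta}.
\end{align*}
Then, using the representation $\bm{\Delta} = \bm{\Theta}\bm{\mathcal{B}}(\bm{x})\mathcal{D}\bm{\varsigma}(t)$ from \eqref{disturbance_system} and $\bm{\dot{\varsigma}}(t) = \mathcal{A}\bm{\varsigma}(t)$, subtracting $\bm{\dot{\hat{\varsigma}}}$ from $\bm{\dot{\varsigma}}(t)$ and collecting terms yields exactly $\bm{\dot{\tilde{\varsigma}}} = (\mathcal{A} - \bm{\Gamma}_h\bm{\Theta}\bm{\mathcal{B}}(\bm{x})\mathcal{D})\bm{\tilde{\varsigma}}$, which is \eqref{stable_condition}. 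By hypothesis $\bm{\Gamma}_h$ is chosen so that this dynamics is exponentially stable, hence $\|\bm{\tilde{\varsigma}}(t)\| \le c\,e^{-\lambda t}\|\bm{\tilde{\varsigma}}(0)\|$ for some $c,\lambda > 0$.

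Next I would pass from $\bm{\tilde{\varsigma}}$ to $\bm{\tilde{\Delta}}$. From \eqref{disturbance_system} and the observer output $\bm{\hat{\Delta}} = \bm{\Theta}\bm{\mathcal{B}}(\bm{x})\mathcal{D}\bm{\hat{\varsigma}}$, we have $\bm{\tilde{\Delta}} = \bm{\Delta} - \bm{\hat{\Delta}} = \bm{\Theta}\bm{\mathcal{B}}(\bm{x})\mathcal{D}\bm{\tilde{\varsigma}}$, modulo the residual $\epsilon'$ from the decomposition in Corollary \ref{Corollary_seperation}. Taking norms and using submultiplicativity gives $\|\bm{\tilde{\Delta}}\| \le \|\bm{\Theta}\mathcal{D}\|\,\|\bm{\mathcal{B}}(\bm{x})\|\,\|\bm{\tilde{\varsigma}}\|$; since $\bm{\mathcal{B}}(\bm{x})$ is built from \textit{Chebyshev} polynomials evaluated on the compact set $[-1,1]^n$, its norm is uniformly bounded, so $\|\bm{\tilde{\Delta}}(t)\|$ inherits the exponential decay (up to the arbitrarily small residual, which can be absorbed or sent to zero as $\epsilon' \to 0$).

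The main obstacle I anticipate is not the algebra of the cancellation — that is routine — but rather being careful about two subtleties: (i) the state-dependence of the gain matrix $\bm{\Lambda}_h(\bm{x}) = \mathcal{A} - \bm{\Gamma}_h\bm{\Theta}\bm{\mathcal{B}}(\bm{x})\mathcal{D}$ means \eqref{stable_condition} is a linear time-varying system through $\bm{x}(t)$, so "exponentially stable" must be interpreted uniformly in the admissible state trajectories, and one should invoke a converse-Lyapunov or uniform-exponential-stability argument rather than a frozen-time eigenvalue condition; and (ii) the residual term $\bm{\Delta} - \bm{\Theta}\bm{\mathcal{B}}(\bm{x})\mathcal{D}\bm{\varsigma}(t)$ of size $\epsilon'$ enters the error dynamics as an additive bounded perturbation, so strictly speaking $\bm{\tilde{\Delta}}$ converges to an $O(\epsilon')$ neighborhood of zero; the "zero-error" claim is in the limit $\epsilon' \to 0$, and I would state this explicitly (or note that by Corollary \ref{Corollary_seperation} the construction works for every $\epsilon' > 0$). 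Handling the time-varying nature of $\bm{\Lambda}_h(\bm{x})$ rigorously — deciding exactly what "$\bm{\Gamma}_h$ can be chosen to make \eqref{stable_condition} exponentially stable" should mean and how it is verified in practice — is the part that needs the most care.
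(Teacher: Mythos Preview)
Your proposal is correct and follows essentially the same route as the paper: differentiate $\bm{\hat{\varsigma}}=\bm{z}_h+\bm{\Gamma}_h\bm{x}$, substitute the plant and auxiliary dynamics so that the $\bm{\Gamma}_h(\bm{f}_x+\bm{f}_u\bm{u})$ terms cancel, obtain $\bm{\dot{\tilde{\varsigma}}}=(\mathcal{A}-\bm{\Gamma}_h\bm{\Theta}\bm{\mathcal{B}}(\bm{x})\mathcal{D})\bm{\tilde{\varsigma}}$, and then read off $\bm{\tilde{\Delta}}=\bm{\Theta}\bm{\mathcal{B}}(\bm{x})\mathcal{D}\bm{\tilde{\varsigma}}\to 0$. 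Your additional caveats about the time-varying nature of $\bm{\Lambda}_h(\bm{x})$ and the $O(\epsilon')$ residual are more careful than the paper, which simply treats \eqref{disturbance_system} as an exact representation and leaves the exponential-stability hypothesis as a black box.
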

\begin{proof}
	{{See Supplementary Materials}} \ref{appen_thm_DO_high}.
\end{proof}

According to \eqref{stable_condition}, the design of ${\bm{\Gamma}_h}$ for polynomial disturbance observer \eqref{DO_higher} is equivalent to the design of the state observer gain for the disturbance system \eqref{disturbance_system}. The methods of state observer design for linear time-varying systems have been developed in many previous works, such as the least-squares-based observer \cite{788544}, the extended linear observer \cite{busawon1999state}, and the block-input/block-output model-based observer \cite{325132}. Due to the convenience of adjusting the pole, the method proposed in \cite{busawon1999state} is employed here to decide ${\bm{\Gamma}_h}$ online. Moreover, the computational burden is mainly concentrated on the inverse of an observable matrix, which is suitable for practical systems. {A detailed discussion can refer to Supplementary Materials.}
\begin{rem}
	The polynomial disturbance observer \eqref{DO_higher} utilizes the offline learned knownledge $\bm{\Theta}\bm{\mathcal{B}}\left(\bm{x}\right)$ and \textcolor{black}{real-time $\left\{\bm{u}, \bm{x}\right\}$} to estimate $\bm{\xi}\left({t}\right)$ online. Two advantages can be achieved. One is that the offset between offline time $t_{f}$ and online time $t_{l}$ need not be known. The other is that the generalization to other kinds of external disturbance $\bm{d}$ can be improved according to real-time feedback (Section \ref{drone_test}).
\end{rem}

\section{Empirical Study} \label{Experiments}

\subsection{Learning performance on nonlinear functions}
A supervised learning strategy is synthesized in Section \ref{online_learning} to extract $\bm{\Theta}$ from disturbance $\bm{\Delta}$. In this part, the learning performance is exemplified and analyzed by following nonlinear functions
\begin{align}
	\label{nonlinear_functions}
	\left\{ {\begin{array}{*{20}{c}}
			{\bm{\Delta} \left( {\bm{x},\bm{d}} \right) = \sin \left(\bm{x}\right)\sin \left( \bm{d} \right),\ \bm{d}= t},\\
			{\bm{\Delta} \left( {\bm{x},\bm{d}} \right) = \bm{x} - \frac{1}{{12}}{\bm{x}^3} - \frac{1}{4}\bm{d},\ \bm{d} = {t^2}},\\
			{\bm{\Delta} \left( {\bm{x},\bm{d}} \right) =  - \frac{1}{9}\sin \left( \bm{x}\right)\bm{d},\ \bm{d} = {t^3}}.
	\end{array}} \right.
\end{align}

\begin{figure*}
	\centering
	\includegraphics[scale=0.18,trim=0 0 0 0,clip]{./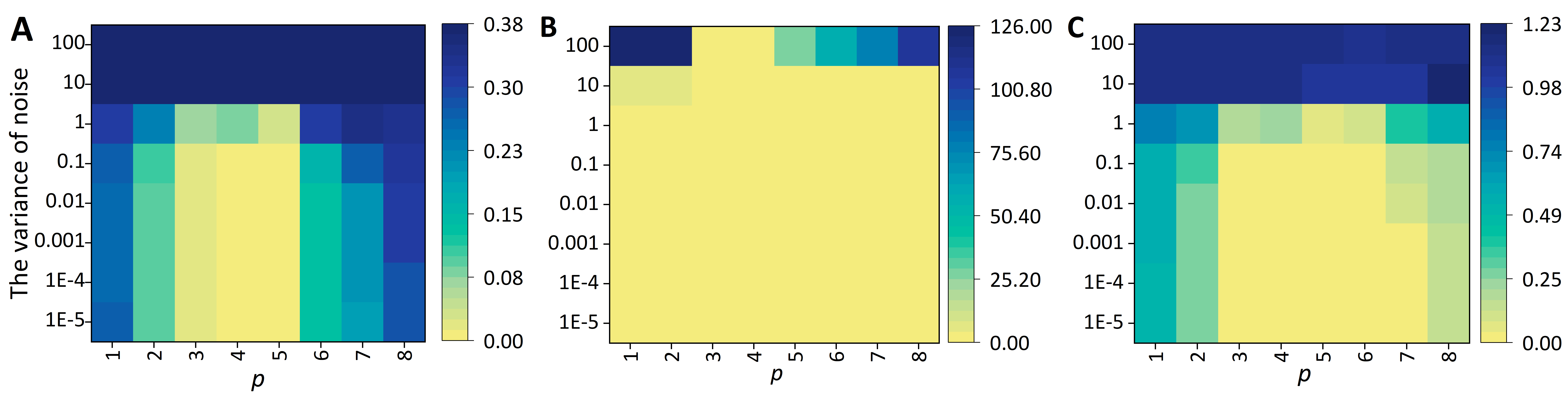}
	\caption{Learning errors (MAE) of three nonlinear functions \eqref{nonlinear_functions} under different noise variances $\bm{\sigma}_x^2$ and parameters $p$ on the test dataset.} 
	\label{Fig_Learning_performance}
\end{figure*}

\paragraph{Setup} The learning dataset is constructed from $\bm{x}\in \left[-2, 2\right]$ and $t \in \left[0, 4\right]$. $10000$ samples are collected and scrambled, where $5000$ for training and $5000$ for testing. The hyperparameter $\delta$ used for training is set as $0.01$. Moreover, the influences of measurement noise and the selection of $p$ (Corollary \ref{Corollary_seperation}) are also analyzed. The state $x$ in the training dataset is corrupted by noise $\mathcal{N}\left( {\bm{0},\bm{\sigma}_x^2} \right)$. The learning performance under different $\bm{\sigma}_x^2$ and $p$ is tested.    

\paragraph{Results} The learning errors of three nonlinear functions \eqref{nonlinear_functions} under different $\bm{\sigma}_x^2$ and $p$ are presented in Fig. \ref{Fig_Learning_performance}\textbf{A}-\textbf{C}. Two phenomena can be observed. On the one hand, the learning performance degrades as the noise variance increases. On the other hand, proper $p$ can achieve decent performance, since small and large $p$ can lead to underfitting and overfitting problems, respectively.

\subsection{Estimation performance on a second-order dynamics}
In this part, the estimation performance of the proposed polynomial disturbance observer is demonstrated through a toy example for intuitive understanding. Considering  a second-order \textit{Newton} system perturbated by a coupled disturbance

\begin{align} 
	\left\{ \begin{array}{l}
		{{\dot{\eta }} = {v}}, {{\dot{v}} = {a}},\\
		{m{{a}} = {u} + {\Delta}(v, d)},
	\end{array} \right.
\end{align}
with position ${\eta}\in \mathbb{R}$, velocity ${v}\in \mathbb{R}$, acceleration ${a}\in \mathbb{R}$, mass $m\in \mathbb{R}$, control input ${u} \in \mathbb{R}$, and coupled disturbance ${\Delta}(v, d)\in \mathbb{R}$. Note that the measured ${v}$ is corrupted by noise $\mathcal{N}\left( {{0},{\sigma} _v^2} \right)$. The coupled disturbance used in the simulation is modeled as
\begin{align} \label{coupled_dis_simulation}
	\Delta (v,d(t)) =  - {v^2} + 50 - 10t - 0.5{t^2}. 
\end{align}
The truth value of $\bm{\Theta}$ of \eqref{coupled_dis_simulation} can be derived, i.e., $\bm{\Theta} = \left[49.75, 0, -0.5, -10, 0, 0, 0.25, 0, 0\right]$.

The objective is to design control input ${u}$ so as to ensure that ${\eta}$ tracks the desired state ${\eta_d}$. The baseline controller adopts the proportional-derivative control, and the estimated disturbance ${\hat{\Delta}}$ by the proposed polynomial disturbance observer is compensated via feedforward, i.e., 
\begin{align} \label{simulation_controller}
	{u} = {K}_\eta {e}_\eta + {K}_v {e}_v - {\hat{\Delta}},
\end{align}
with gains ${K}_\eta\in \mathbb{R}$ and ${K}_v\in \mathbb{R} $, and tracking errors ${e}_\eta = {\eta_d}-{\eta}$ and ${e}_v = {v_d}-{v}$.

\paragraph{Setup}
The learning dataset is constructed from ${v}\in \left[-10, 10\right]$ and $t \in \left[0, 100\right]$. $10000$ samples are collected. The hyperparameter $\delta$ used for training is set as $0.01$. $p$ in Theorem \ref{thm_separation} is chosen as $2$. The desired tracking trajectory is set as $\eta_d = sin(\frac{1}{2}t)$. The variance ${\sigma} _v^2$ of imposed noise is set as $0.1$. The baseline controller gains ${K}_\eta$ and ${K}_v$ are tuned as $10$ and $25$, respectively. The NDO \cite{NDO_bounded_variation}, ESO \cite{ESO, GU2022105158}, $\mathcal{L}_1$ adaptive \cite{huang2023datt, hanover2021performance}, EVOLVER \cite{10288520}, and the baseline controller (without compensation) are taken as comparisons. For the sake of fairness, the observer gains (in charge of the convergence speed) of the compared NDO, $\mathcal{L}_1$ adaptive, EVOLVER, and the proposed one are set to be the same. All eigenvalues of $\bm{\Lambda}_h\left(\bm{x}\right)$ are set as $0.4$. 

\paragraph{Results}
Denote $\bm{\hat{\Theta}}$ as the learning result of $\bm{\Theta}$. By employing the proposed learning strategy designed in Section \ref{online_learning}, the learning error $\left\|\bm{\Theta}-\bm{\hat{\Theta}}\right\|$ is $1.1473\times10^{-3}$, which demonstrates the effectiveness of the proposed learning strategy in Section \ref{online_learning}. As shown in Fig. \ref{simulation_result}\textbf{A}, the tracking performance of learning-based methods (EVOLVER and proposed one \eqref{DO_higher}) outperform feedback-based methods (Baseline, NDO \cite{NDO_bounded_variation}, $\mathcal{L}_1$ adaptive \cite{huang2023datt, hanover2021performance}, ESO \cite{GU2022105158, ESO}). Focusing on feedback-based methods, there are always estimations lags from Fig. \ref{simulation_result}\textbf{B}. Compared with EVOLVER, a \textit{Koopman}-based online learning framework, the proposed one can avoid the dataset construction process and allows for a more in-depth learning of the coupled structure on disturbance \eqref{coupled_dis_simulation}, leading to the smallest tracking and estimation error (Figs. \ref{simulation_result}\textbf{A}-\textbf{C}). 


\begin{figure*}
	\centering
	\includegraphics[scale=0.35,trim=0 0 0 0,clip]{./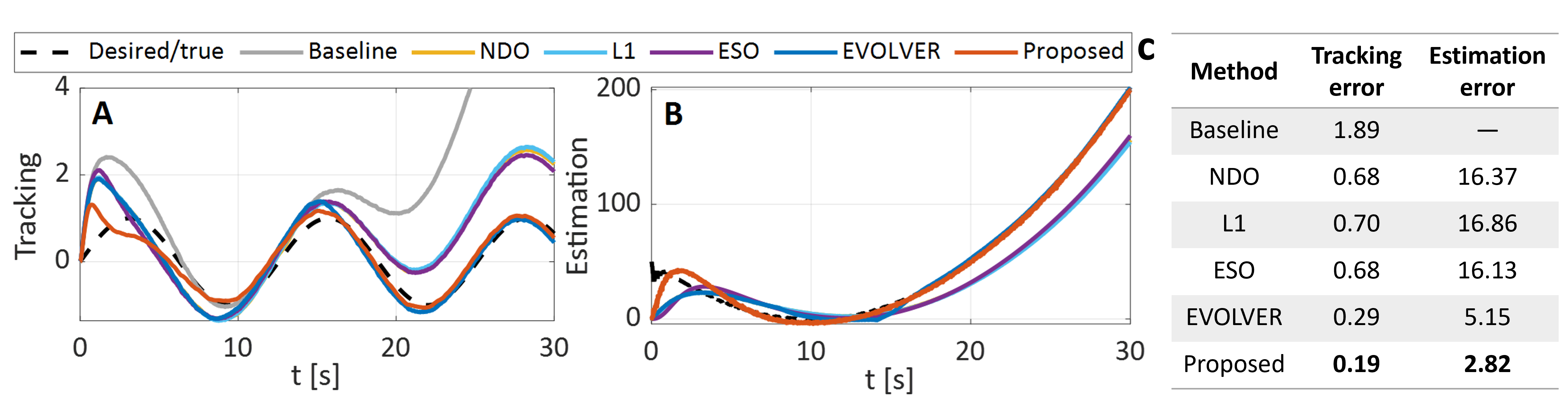}
	\caption{The tracking and estimation performance of the second-order dynamics example. \textbf{A}, The trajectory tracking performance. \textbf{B}, The disturbance estimation performance. \textbf{C}, MAEs of all compared methods: NDO \cite{NDO_bounded_variation}, ESO \cite{GU2022105158, ESO}, $\mathcal{L}_1$ adaptive \cite{huang2023datt, hanover2021performance}, EVOLVER \cite{10288520}, the baseline controller (without compensation) and the proposed one \eqref{DO_higher}.}
	\label{simulation_result}
\end{figure*}

\subsection{Control performance on a flying drone} \label{drone_test}

The proposed learning-based observer is further verified in real flight tests, which consider a drone flying under multiple disturbances. The conventional cascaded baseline control framework is employed here, where the coupled disturbances are further estimated by the proposed polynomial disturbance observer \eqref{DO_higher} to enhance the control accuracy.

\begin{figure}
	\centering
	\includegraphics[scale=0.27,trim=0 10 0 0,clip]{./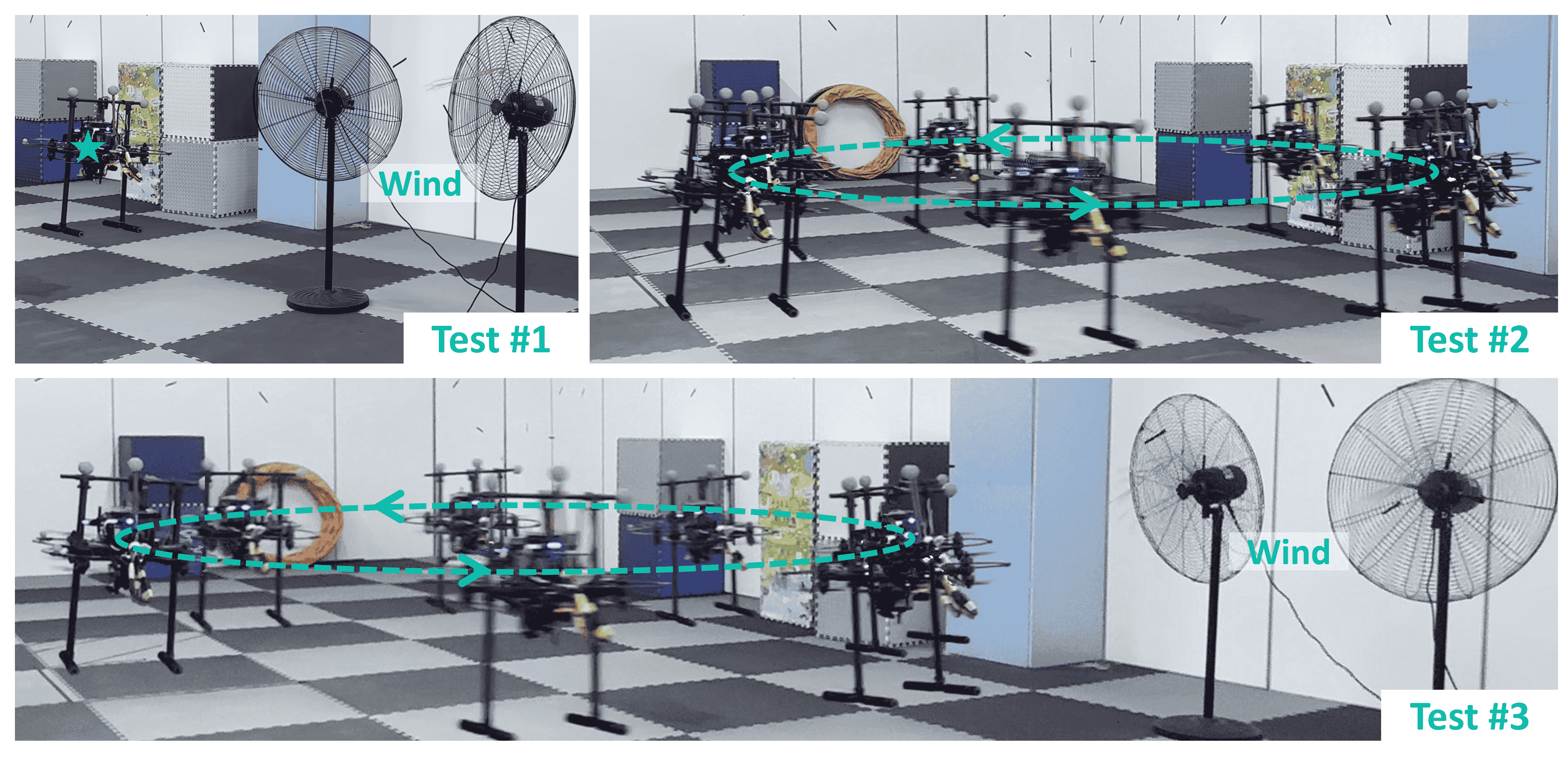}
	\caption{\textcolor{black}{Eexperimental scenarios in the drone example. The location state of the drone is provided by the NOKOV motion capture system. \textbf{A}, Hovering under wind disturbance. \textbf{B}, Circle flight under model uncertainty. \textbf{C}, Circle flight under model uncertainty and wind disturbance.}}
	\label{arrange_test}
\end{figure}


\paragraph{Setup}
Three scenarios are implemented indoor, including hovering under wind disturbance (Fig. \ref{arrange_test}\textbf{A}), circle flight under model uncertainty (Fig. \ref{arrange_test}\textbf{B}), and circle flight under both model uncertainty and wind disturbance (Fig. \ref{arrange_test}\textbf{C}). $\bm{\Theta}$ is offline trained under single mass uncertainty with $2500$ samples. $p$ in Theorem \ref{thm_separation} is chosen as $2$. The hyperparameter $\delta$ used for training is set as $0.01$.  Since the mass uncertainty mainly depends on unmeasurable acceleration, inspired by the time-delay embedding theory \cite{takens2006detecting}, $9$ historical velocities are utilized and reduced to $4$ through an autoencoder, which is finally regarded as $\bm{x}$ of ${\bm{\Delta}}\left( {\bm{x}, \bm{d}}\right)$. The mass uncertainty in tests is around $1\ kg$, constituting more than $35\%$ of the entire system. The wind is considered to verify the generalization ability. The maximum wing speed can reach at $5\ m/s$. The radius and period of the commended circle are set as $1\ m$ and $7\ s$, respectively. The observer \eqref{DO_higher} is running on an onboard Intel NUC with $100\ HZ$.

In the outdoor test, the drone is commanded to follow a circular trajectory in the presence of both mass uncertainty (around $1.2\ kg$) and natural wind, as illustrated in Fig. \ref{Outdoor_test}\textbf{E}. The radius and period of the commanded circle are set to be $1.2\ m$ and $7\ s$, respectively. The observer gain ${\bm{\Gamma}_h}$ in Eq. \eqref{DO_higher} is decided online \cite{busawon1999state} so that the eigenvalues of $\bm{\Lambda}_h\left(\bm{x}\right)$ are all $-0.6$.

\begin{figure*}
	\centering
	\includegraphics[width=0.9\linewidth]{./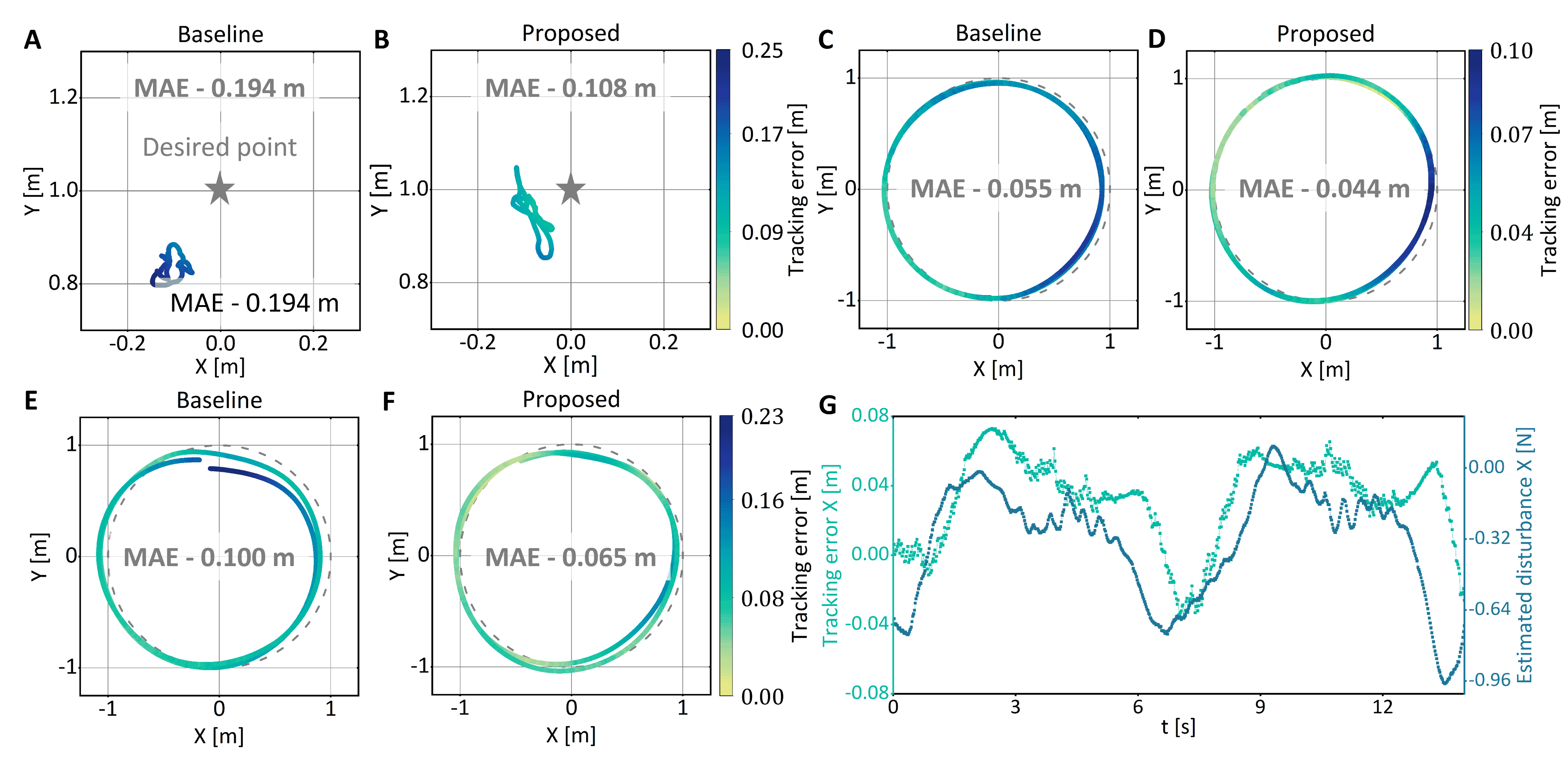}
	\caption{\textcolor{black}{The tracking and estimation performance in the drone example.} \textbf{A}-\textbf{B}, Test \#1. \textbf{C}-\textbf{D}, Test \#2. \textbf{E}-\textbf{F}, Test \#3. \textbf{G}, Tracking error and estimated disturbance along the $X$-axis in Test \#1.}
	\label{flight_result}
\end{figure*}

\paragraph{Results}
The control performance of all $3$ indoor tests is summarized in Fig. \ref{flight_result}. The control errors (MAE) are improved $44.3\%$, $20.0\%$, and $35.0\%$, respectively. The results show that the proposed one can successfully enhance the tracking performance with the compensation of estimated disturbance, even in the face of unseen wind disturbance. Fig. \ref{flight_result}\textbf{G} further provides the control and estimation performance along the $X$-axis in Test \#3, under both mass and wind disturbances. It can be seen the changing trends of tracking error and estimated disturbance are roughly consistent. Moreover, the computation burden is mainly around the solution of gain ${\bm{\Gamma}_h}$. {The computation cost is provided in Fig. \ref{Outdoor_test}\textbf{G}.}

We further conduct an outdoor flight test under natural wind. As depicted in Fig. \ref{Outdoor_test}\textbf{F}, the tracking delay and overshoot issues of the baseline method are mitigated by the proposed algorithm, particularly along the $X$-axis. Fig. \ref{Outdoor_test}\textbf{D} further presents the distributions of the tracking errors, which effectively demonstrates the robustness of the proposed approach in the outdoor scenario.

\begin{figure*}
	\centering
	\includegraphics[width=0.9\linewidth,trim=0 0 0 0,clip]{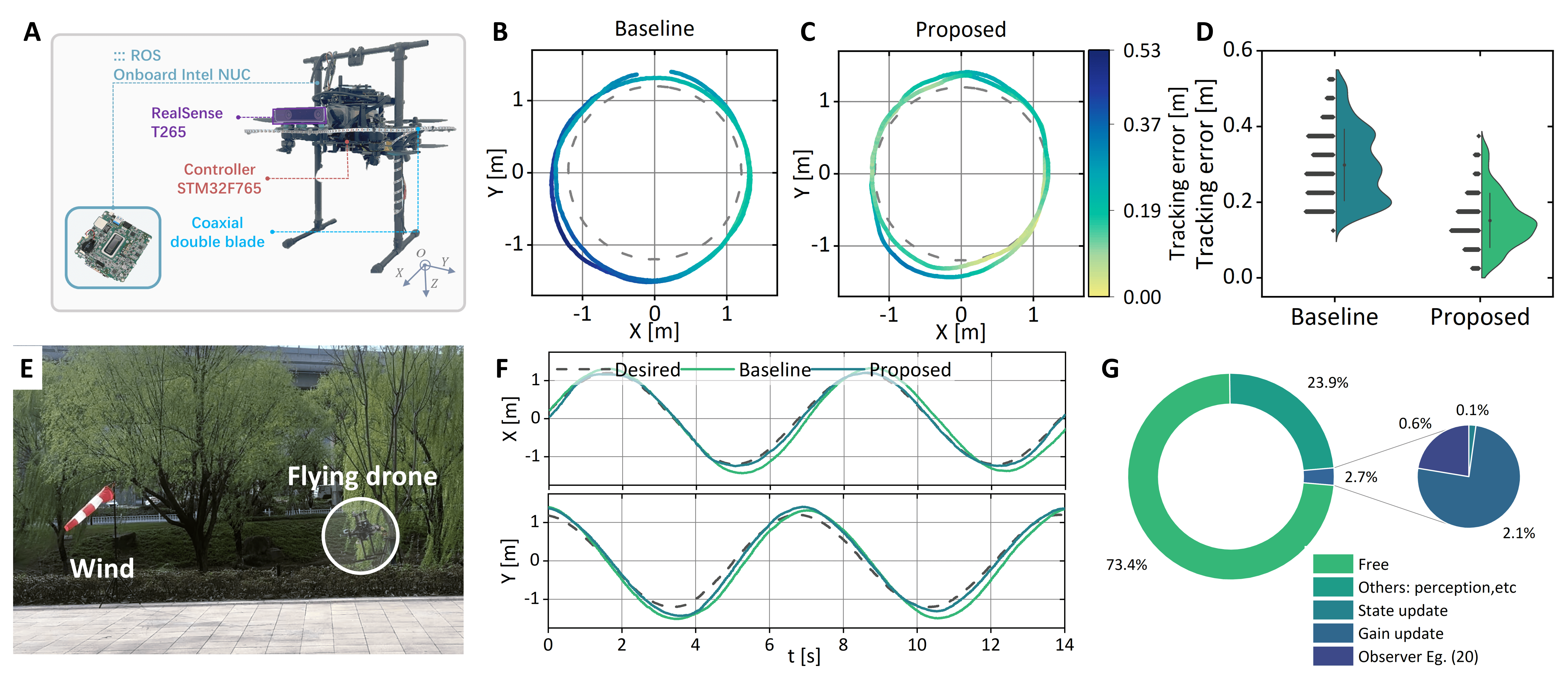}
	\caption{Flight results in the outdoor scenario. \textbf{A}, The employed flight platform. \textbf{B}, $2D$ view of the trajectory tracking by the baseline method. \textbf{C}, $2D$ view of the trajectory tracking by the proposed method. \textbf{D}, Violin plot of the tracking errors. \textbf{E}, The outdoor experimental scenario. \textbf{F}, Detailed tracking performance along $X$ and $Y$ directions of the compared two methods. \textbf{G}, The computation cost of the onboard Inter NUC.
	}
	\label{Outdoor_test}
\end{figure*}


\section{Conclusion} \label{Conclusion}
In this article, we propose a learning-based disturbance observer for robotic systems perpetuated by a coupled disturbance. The considered coupled disturbance is difficult to model explicitly. Firstly, a variable decomposition principle is presented by leveraging the \textit{Chebyshev} series expansion. An RLS-based learning strategy is subsequently developed to learn the separated unknown parameter matrix using historical data, which maintains a low computational complexity. Finally, the polynomial observer is developed by utilizing the learned structure, which can theoretically achieve a precise estimation of the coupled disturbance. The learning and estimation performance of the proposed method is verified by several simulational and experimental examples.

{Although the proposed framework showcases compelling theoretical and practical promise, the convergence of the presented uncertainty estimation algorithm is based on the analytic assumption on $\bm{d}$. Therefore, effectively handling the nonanalytic disturbances with a theoretical guarantee, such as contact-induced, step-like scenarios,  remains an open challenge that warrants further investigation.}




\bibliographystyle{IEEEtran}	
\bibliography{reference.bib} 

\clearpage

\section*{Supplementary Materials}

\renewcommand{\algorithmicrequire}{ \textbf{Offline learning phase:}} 
\renewcommand{\algorithmiccomment}{ \textbf{Initialize:}} 
\renewcommand{\algorithmicensure}{ \textbf{Online estimation phase:}} 

\subsection{Proof of Theorem \ref{thm_separation}} \label{appen_thm_separation}

\begin{proof}
	$\bm{\phi}_i\left(\bm{x}\right) \bm{\xi}\left(\bm{d}\right)$ in Lemma \ref{lemma1} is a compact product form of the truncated \textit{Chebyshev} expansions presented in \eqref{Chebyshev_series}. $b^i_{{k_1}, \cdots ,{k_n},{l_1}, \cdots ,{l_m}} \in \mathbb{R}$ represents the polynomial coefficient. Later in the article, $b^i_{{k_1}, \cdots ,{k_n},{l_1}, \cdots ,{l_m}}$ is  simplified as $b^i_{h_k,h_l}$ by letting ${h_k} = \sum\limits_{i = 1}^n {{k_i}{{\left( {p + 1} \right)}^{i - 1}}}$ and $h_l = \sum\limits_{i = 1}^m {{l_i}{{\left( {p + 1} \right)}^{i - 1}}}$. $T_{{i}}$ represents the $i$-th order \textit{Chebyshev} polynominal. 
	
	\begin{align} \label{Chebyshev_series}
		\bm{\phi}_i\left(\bm{x}\right) \bm{\xi}\left(\bm{d}\right) = &
		\sum\limits_{{k_1} = 0}^{p}  \cdots  \sum\limits_{{k_n} = 0}^{p} {\sum\limits_{{l_1} = 0}^{p}  \cdots  } \sum\limits_{{l_m} = 0}^{p} {b^i_{{k_1}, \cdots ,{k_n},{l_1}, \cdots ,{l_m}}} \notag \\
		& \cdot {T_{{k_1}}}\left( \bm{x}_1 \right) \cdots {T_{{k_n}}}\left(\bm{x}_n \right){T_{{l_1}}}\left( \bm{d}_1 \right) \cdots {T_{{l_m}}}\left( \bm{d}_m \right).
	\end{align}  
	
	\eqref{phi_i} and \eqref{xi} further detail the architectures of $\bm{\phi}_i\left(\bm{x}\right)$ and $\bm{\xi}\left(\bm{d}\right)$ in \eqref{Chebyshev_series} respectively, for the convenience of using in the remainder of this article.
	\begin{align} \label{phi_i}
		\bm{\phi}_i\left(\bm{x} \right) = [ & \sum\limits_{{k_1} = 0}^{p}  \cdots  \sum\limits_{{k_n} = 0}^{p} {b^i_{h_k,h_l}{T_{{k_1}}}\left(\bm{x_1}\right) \cdots {T_{{k_n}}}\left(\bm{x_n}\right){|_{h_l=0}}}\notag\\
		& \sum\limits_{{k_1} = 0}^{p}  \cdots  \sum\limits_{{k_n} = 0}^{p} {b^i_{h_k,h_l}{T_{{k_1}}}\left(\bm{x_1}\right) \cdots {T_{{k_n}}}\left(\bm{x_n}\right){|_{h_l=1}}}\notag\\
		&\quad \quad \quad \quad \quad \quad \quad \quad \quad \vdots \notag\\
		\sum\limits_{{k_1} = 0}^{p}  \cdots  &\sum\limits_{{k_n} = 0}^{p} {b^i_{h_k,h_l}{T_{{k_1}}}\left(\bm{x_1}\right) \cdots {T_{{k_n}}}\left(\bm{x_n}\right){|_{h_l=\left(p+1\right)^m -1}}}]^T.
	\end{align}
	
	\begin{align} \label{xi}
		\bm{\xi}\left(\bm{d} \right) = [ &{T_{{l_1}}}\left(\bm{d}_1\right) \cdots {T_{{l_m}}}\left(\bm{d}_m\right){|_{h_l=0}}\notag\\
		& {T_{{l_1}}}\left(\bm{d}_1\right) \cdots {T_{{l_m}}}\left(\bm{d}_m\right){|_{h_l=1}}\notag\\
		&\quad \quad \quad \quad \quad \quad \quad \quad \quad \vdots \notag\\
		& {T_{{l_1}}}\left(\bm{d}_1\right) \cdots {T_{{l_m}}}\left(\bm{d}_m\right){|_{h_l=\left(p+1\right)^m -1}}].
	\end{align}
	
	Denote the $j$-th column of $\bm{\phi}_i\left(\bm{x} \right)$ in \eqref{phi_i} as $\bm{\phi}_{ij}\left(\bm{x} \right)$. By further splitting $\bm{\phi}_{ij}\left(\bm{x} \right)$, it can be obtained that
	\begin{align} 
		\bm{\phi}_{ij}\left(\bm{x} \right) &=  \sum\limits_{{k_1} = 0}^{p}  \cdots  \sum\limits_{{k_n} = 0}^{p} {b^i_{h_k,h_l}{T_{{k_1}}}\left(\bm{x_1}\right) \cdots {T_{{k_n}}}\left(\bm{x_n}\right){|_{h_l=j-1}}}\notag\\
		& = \bm{b}^{ij} \cdot \bm{\Pi}\left(\bm{x}\right).
	\end{align}
	where $\bm{b}^{ij} = \left[{b_{{h_k},{h_l}}^i|_{{h_l} = j - 1}^{{h_k} = 0}}, {b_{{h_k},{h_l}}^i|_{{h_l} = j - 1}^{{h_k} = 1}}, \cdots,\right.$ $\left.{b_{{h_k},{h_l}}^i|_{{h_l} = j - 1}^{{h_k} = {{\left( {p + 1} \right)}^n} - 1}}\right] \in \mathbb{R}^{1\times \left(p+1\right)^n}$, and \[\bm{\Pi}\left(\bm{x}\right)  = \left[ {\begin{array}{*{20}{c}}
			{{T_{{k_1}}}\left(\bm{x_1}\right) \cdots {T_{{k_n}}}\left(\bm{x_n}\right){|_{{h_k} = 0}}}\\
			{{T_{{k_1}}}\left(\bm{x_1}\right) \cdots {T_{{k_n}}}\left(\bm{x_n}\right){|_{{h_k} = 1}}}\\
			\vdots \\
			{{T_{{k_1}}}\left(\bm{x_1}\right) \cdots {T_{{k_n}}}\left(\bm{x_n}\right){|_{{h_k} = {{\left( {p + 1} \right)}^n} - 1}}}
	\end{array}} \right] \in \mathbb{R}^{\left(p+1\right)^n}.\]
	
	Denote the $j$-th column of $\bm{\mathcal{B}}\left(\bm{x}\right)$ as $\bm{\mathcal{B}}_j\left(\bm{x}\right)$, and it is constructed as \[\bm{\mathcal{B}}_j\left( \bm{x} \right) = {\left[ {\underbrace {0, \cdots, 0}_{ {\left(j-1\right)\left(p+1\right)^n}},{\bm{\Pi}\left(\bm{x}\right) ^T},\underbrace {0, \cdots ,0}_{ {\left(\left(p+1\right)^{m+n}-j\left(p+1\right)^n\right)}}} \right]^T},\] 
	with $s_1 = \left(p+1\right)^{m+n}$ and $s_2 = \left(p+1\right)^{m}$.
	
	Denote the $i$-th row of $\bm{\Theta}$ as ${\bm{\Theta} _i}$, and it is constructed as 
	\[{\bm{\Theta} _i} = \left[ {{\bm{b}^{i1}},{\bm{b}^{i2}}, \cdots ,{\bm{b}^{i{{\left( {p + 1} \right)}^m}}}} \right] \in  \mathbb{R}^{1 \times {{\left( {p + 1} \right)}^{m + n}}}.\] It can be proven that
	\begin{align}
		\bm{\phi}_i\left(\bm{x}\right) = {\bm{\Theta} _i}\cdot \bm{\mathcal{B}}\left(\bm{x}\right).
	\end{align}
	Let $\bm{C}_i\left({\bm{x}, \bm{d}}\right)$ represent the $i$-th row of $\bm{C}\left({\bm{x}, \bm{d}}\right)\in  \mathbb{R}^n$ and define $\bm{C}_i\left({\bm{x}, \bm{d}}\right) = \bm{\phi}_i\left(\bm{x}\right)\bm{\xi}\left(\bm{d}\right)$, resulting in 
	\begin{align}
		\bm{C}\left({\bm{x}, \bm{d}}\right) = \bm{\phi}\left(\bm{x}\right)\bm{\xi}\left(\bm{d}\right)=\bm{\Theta}\bm{\mathcal{B}}\left(\bm{x}\right)\bm{\xi}\left(\bm{d}\right). \end{align}
	Set $\epsilon_i \le \left(\epsilon'/\sqrt{n}\right)$. From Lemma \ref{lemma1}, there exist $s_2^i = O\left(log\left(1/\epsilon\right)^m\right) \in \mathbb{Z}^+$  such that 
	\begin{align}
		\mathop {\sup }\limits_{[x,d] \in {{[ - 1,1]}^{n + m}}} \left| \bm{\Delta}_i\left({\bm{x}, \bm{d}}\right)  - \bm{C}_i\left({\bm{x}, \bm{d}}\right)\right| \le \epsilon_i.
	\end{align}
	Choose ${s_2} = \max \left\{ {s_2^1,s_2^2, \cdots s_2^n} \right\}$, it can be implies that
	\begin{align}
		\mathop {\sup }\limits_{[x,d] \in {{[ - 1,1]}^{n + m}}} \left\| \bm{\Delta}\left({\bm{x}, \bm{d}}\right)  - \bm{C}\left({\bm{x}, \bm{d}}\right)\right\| \le \sqrt {\sum\limits_{i = 1}^n {{\epsilon _i^2}}} \le \epsilon',
	\end{align}
	with $s_1 =  O\left(log\left(\sqrt{n}/\epsilon'\right)^{m+n}\right)$ and $s_2 =  O\left(log\left(\sqrt{n}/\epsilon'\right)^m\right)$.
\end{proof}

\subsection{Illustration on Corollary \ref{Corollary_seperation}} \label{appen_coro}
\textcolor{black}{A practical example is given here to instantiate the decomposition \eqref{eq_cor_separation}. In \cite{kai2017nonlinear, faessler2017differential}, the wind disturbance for the quadrotor is modeled as}
\begin{align} \label{wind_dis}
	\textcolor{black}{\bm{\Delta} = \bm{R}\bm{D}{\bm{R}^T}{\bm{v}_w},}
\end{align}
\textcolor{black}{where $\bm{D}\in\mathbb{R}^{3\times3}$ represents drag coefficients, $\bm{v}_w\in\mathbb{R}^{3}$ is the unknown external wind speed, and $\bm{R}\in\mathbb{R}^{3\times3}$ denotes the rotation matrix from body frame to inertial frame. By regarding $\bm{R}$ as $\bm{x}$ and $\bm{v}_w$ as $\bm{d}$ respectively, it can be seen that \eqref{wind_dis} has already been a decomposed form. However, the linear model \eqref{wind_dis} lacks accuracy as the high-order aerodynamics are not captured. By resorting to the decomposition \eqref{eq_cor_separation}, high-order aerodynamics can be included. Moreover, it is better to characterize the unknown time-varying wind speed as a polynomial function of $t$ with an appropriate order than to treat it as a constant or bounded value like in \cite{o2022neural,richards2022control}.}

\subsection{Proof of Theorem \ref{thm_DO_high}} \label{appen_thm_DO_high}

\begin{proof}
	From \eqref{DO_higher}, differentiate the estimation error ${\bm{{\tilde{\varsigma}}}}$. It can be implied that 
	\begin{align}
		{\bm {\dot{\tilde {\varsigma}}}} &= {\bm \dot{{{\varsigma}}}(t)} -{\bm \dot{{\hat {\varsigma}}}} \notag \\
		& = {\bm {\dot{{\varsigma}}}}(t) -  {\bm{\dot{z}}_h} - {\bm{\Gamma} _h}\bm{\dot{x}} \notag \\
		& { = } {\bm {\dot{{\varsigma}}}}(t) - \mathcal{A} \bm{\hat {\varsigma}}  + {\bm{\Gamma}_h}({\bm{f}_x}\left( \bm{x} \right) + {\bm{f}_u}\left( \bm{x} \right)\bm{u}+\bm{\Theta}\bm{\mathcal{B}}\left(\bm{x}\right)\mathcal{H}\bm{\hat {\varsigma}})- {\bm{\Gamma} _h}\bm{\dot{x}} \notag \\
		& {\mathop  = \limits^{(a)} } (\mathcal{A}-{\bm{\Gamma}_h}\bm{\Theta}\bm{\mathcal{B}}\left(\bm{x}\right)\mathcal{H}) {\bm {{\tilde {\varsigma}}}},
	\end{align}
	where $(a)$ is obtained by substituting \eqref{system_model}.
	
	It can be seen that the estimation error $\bm{\tilde{{\varsigma}}}$ will converge to zero exponentially if ${\bm{\Gamma}_h}$ can be chosen to make ${\bm {\dot{\tilde {\varsigma}}}}= (\mathcal{A}-{\bm{\Gamma}_h}\bm{\Theta}\bm{\mathcal{B}}\left(\bm{x}\right)\mathcal{H}) {\bm {{\tilde {\varsigma}}}}$ exponentially stable. Finally, the estimated coupled disturbance can converge to the truth value exponentially as $\bm{\tilde {\varsigma}} \to 0$ because of $\bm{\tilde{{\Delta}}} = \bm{\Theta}\bm{\mathcal{B}}\left(\bm{x}\right)\mathcal{H}\bm{\tilde {\varsigma}}$.
\end{proof}

\subsection{Existence of Observer Gain} \label{appen_abser}
The method proposed in \cite{busawon1999state} is employed here to decide ${\bm{\Gamma}_h}$ online. Three conditions need to be met to guarantee the existence of ${\bm{\Gamma}_h}$. \textbf{(C-1)} The observability matrix ${\bm M}_o(\bm{x})$ is full rank; \textbf{(C-2)} The system matrix $\mathcal{{A}}$, output matrix $\mathcal{C}(\bm{x}) = \bm{\Theta}\bm{\mathcal{B}}\left(\bm{x}\right)\mathcal{H}$, and their respective time derivatives are all bounded; \textbf{(C-3)} There exists a positive $\kappa$ such that ${\sup _{{\mu}  \ge 1}}\left\{ {\left\| {\bm{M}_{\mu} \dot{\bm{M}}_o(\bm{x}){\bm{M}_o^{ - 1}}(\bm{x}){\bm{M}_{\mu} ^{ - 1}}} \right\|} \right\} \le \kappa $, where
\begin{align}
	\bm{M}_0(\bm{x}) = \left[ {\begin{array}{*{20}{c}}
			{\mathcal{C}\left( \bm{x} \right)}\\
			{\mathcal{C}\left( \bm{x} \right)\mathcal{{A}}}\\
			\vdots \\
			{\mathcal{C}\left( \bm{x} \right){\mathcal{{A}}^{n - 1}}}
	\end{array}} \right], \quad \bm{M}_{\mu}  = \left[ {\begin{array}{*{20}{c}}
			{\frac{1}{\mu }}&{}&{}&{}\\
			{}&{\frac{1}{{{\mu ^2}}}}&{}&{}\\
			{}&{}& \ddots &{}\\
			{}&{}&{}&{\frac{1}{{{\mu ^n}}}}
	\end{array}} \right].
\end{align}
Due to the bounded RLS learning and differentiable property of \textit{Chebyshev} polynomial, \textbf{(C-2)} and \textbf{(C-3)} can be naturally met for the disturbance system \eqref{disturbance_system}. As for \textbf{(C-1)}, it mainly depends the trajectory of $\bm{x}$. Fig. \ref{eigen} shows the eigenvalue distributions of $(\mathcal{A}-{\bm{\Gamma}_h}\bm{\Theta}\bm{\mathcal{B}}\left(\bm{x}\right)\mathcal{H})$ in our simulational and experimental tests, it can be seen that the \textbf{(C-1)} is always satisfied (even in the hovering flight test). In other unexplored applications, the rank of ${\bm M}_o(\bm{x})$ can be detected in real time. No compensation will be given if ${\bm M}_o(\bm{x})$ is not full rank. 

Future work will attempt to incorperate online active leanring \cite{abraham2019active} into the presented observer, ensuring ${\bm M}_o(\bm{x})$ is full rank under explorative trajectories.

\renewcommand*{\thefigure}{S1}
\begin{figure}
	\centering
	\includegraphics[scale=0.16,trim=0 0 0 0,clip]{./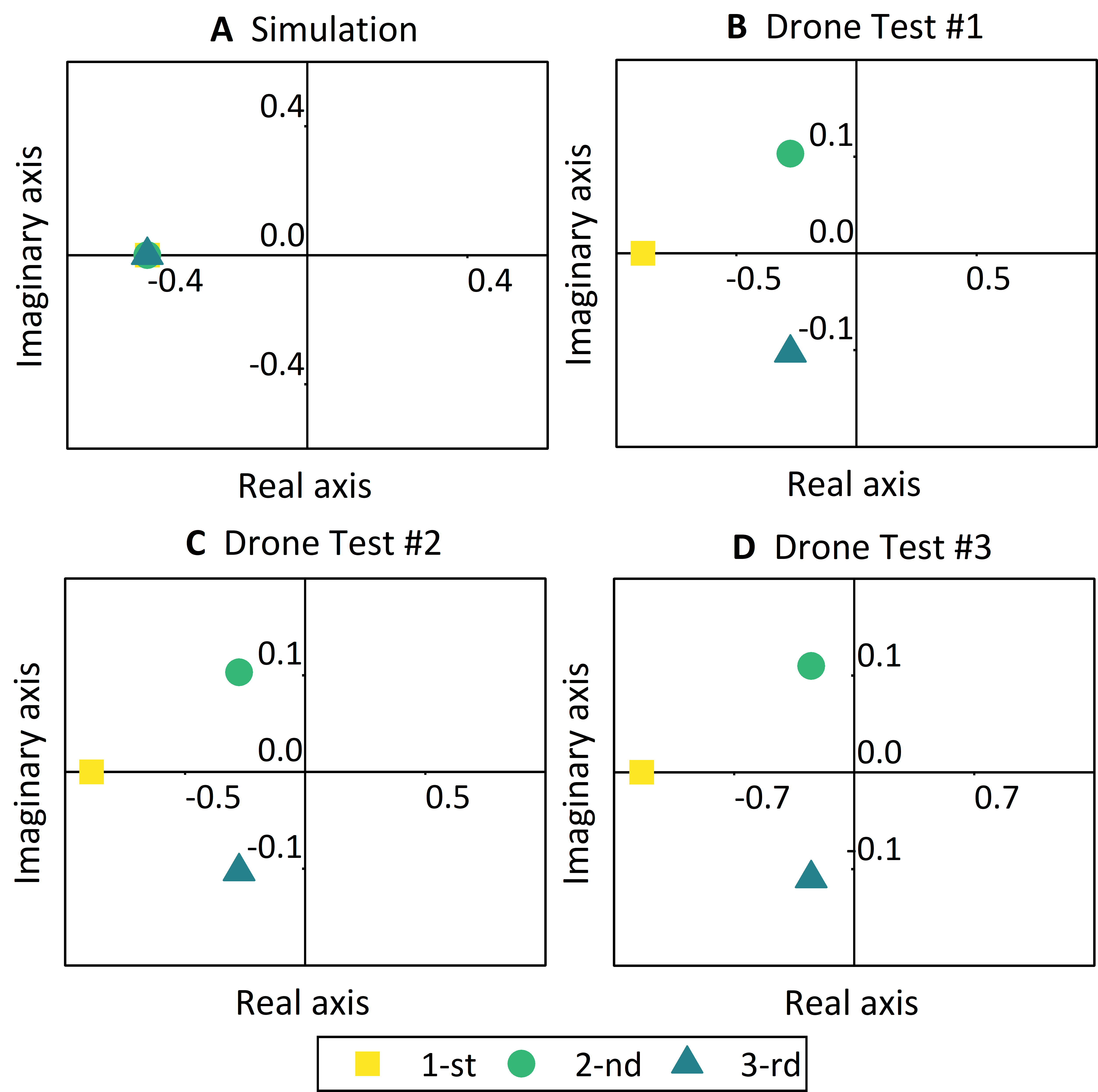}
	\caption{The eigenvalue distributions of $(\mathcal{A}-{\bm{\Gamma}_h}\bm{\Theta}\bm{\mathcal{B}}\left(\bm{x}\right)\mathcal{H})$ in our simulation and three drone tests (X-direction) during the entire runtime. It can be seen that all three eigenvalues in each test have negative real parts.}
	\label{eigen}
\end{figure}

\subsection{Drone Flight Test} \label{appen_exp_drone}

\subsubsection{Platform}

In the experiments, a coaxial double-blade drone equipped with an onboard Intel NUC is utilized, as shown in Fig. \ref{Outdoor_test}\textbf{A}. The baseline control algorithm runs on the STM32F765 microprocessor. Meanwhile, the disturbance observer operates on the Intel NUC. The control frequency is set at $100\ HZ$. For outdoor tests without a high-precision motion capture system, the onboard RealSense T265 is adopted for localization. Communication with the ground station is achieved through an ultra-wideband data transmission module, with a transmission frequency of $50\ HZ$.

\subsubsection{Computation cost}

Fig. \ref{Outdoor_test}\textbf{G} illustrates the computation cost of the utilized Intel NUC. During the flight test, $26\%$ of the CPU resources are occupied. The proposed learning-based disturbance observer only accounts for $2.7\%$ of the CPU resources. Among this $2.7\%$, $2\%$ is allocated for the update of the gain. Thus, the computational power consumption of the proposed algorithm is lightweight enough, enabling its application in realistic scenarios.


\end{document}